%% file: HFRP_arxiv.tex
\documentclass{article}
\usepackage[utf8]{inputenc} 
\usepackage[T1]{fontenc}    
\usepackage{hyperref}       
\usepackage{url}            
\usepackage{booktabs}       
\usepackage{amsfonts}       
\usepackage{nicefrac}       
\usepackage{microtype}      
\usepackage{xcolor}         

\usepackage{amsmath,amsthm,amssymb,mathrsfs}
\usepackage{tikz}
\usetikzlibrary{positioning,shapes.geometric,arrows.meta,fit}
\usepackage{enumitem}
\usepackage{subcaption}
\usepackage[numbers]{natbib}
\usepackage[a4paper,margin=1.3in]{geometry}

\title{Learning Sparse Compositional Functions with Norm-Constrained Neural Networks}

\author{
Shuo Huang\thanks{
IIT@MIT, Istituto Italiano di Tecnologia, Genoa, Italy. Email:
\href{mailto:shuo.huang@iit.it}{\texttt{shuo.huang@iit.it}}.
}
\quad
Lorenzo Fiorito\thanks{
MaLGa, DIMA, Universit\`a degli Studi di Genova, Genoa, Italy.
Email: \href{mailto:lorenzo.fiorito@edu.unige.it}{\texttt{lorenzo.fiorito@edu.unige.it}}.
}
\quad
Lorenzo Rosasco\thanks{
MaLGa, DIBRIS, Universit\`a degli Studi di Genova, Genoa, Italy;
IIT@MIT, Istituto Italiano di Tecnologia, Genoa, Italy.
Email: \href{mailto:Lorenzo.Rosasco@unige.it}{\texttt{Lorenzo.Rosasco@unige.it}}.
}
\quad
Tomaso Poggio\thanks{
CBMM, Massachusetts Institute of Technology, Cambridge, MA, USA.
Email: \href{mailto:tp@csail.mit.edu}{\texttt{tp@csail.mit.edu}}.
}
}

\date{}

\input{macros}

\begin{document}

\maketitle

\begin{abstract}
The ability of deep neural networks to learn hierarchical features is widely regarded as a key mechanism underlying their success in high-dimensional learning. Existing theory partially supports this view by establishing approximation rates based on parameter counts and sample complexity guarantees for compositional models without incurring the curse of dimensionality (CoD). To study overparameterized regimes, where the number of parameters exceeds the sample size, we develop a framework that measures complexity via the parameter norm. Within this approach, we establish approximation rates and excess risk bounds for learning sparse compositional functions whose compositional structure is represented by directed acyclic graphs (DAGs), using Frobenius norm-constrained deep neural networks.
Our results have broad applicability since every function that is efficiently Turing computable admits sparse compositional representations. In particular, we cover a range of representative models, including multi-index models, binary tree structures, and general compositional architectures. The rates we derive show that deep networks can exploit the compositional structure of the target functions, effectively avoiding the CoD through hierarchical representations.
\end{abstract}

\section{Introduction}
The success of deep learning in high-dimensional tasks such as vision and language has motivated the search for principles underlying its effectiveness. 
A central insight is that depth enables efficient hierarchical representations, progressively extracting features of increasing complexity \citep{tishby2015deep,cagnetta2024deep,ren2026provable}. 
This organization parallels hierarchical processing in the visual cortex \citep{van1983hierarchical,felleman1991distributed,grill2004human,dicarlo2012does} and arises in structured data such as syntactic parse trees and object–part decompositions \citep{chomsky1956three,biederman1987recognition}. 
These observations have led to hierarchical compositional data models that play a key role in theoretical analyses of deep learning efficiency \citep{allen2019can,abbe2022merged,sclocchi2025phase}. 
A prominent example is the class of sparse compositional functions, where each component depends on only a few variables \citep{mhaskar2017and,poggio2017and,poggio2024compositional}.
This perspective is consistent with architectures incorporating locality, such as convolutional neural networks and transformers \citep{krizhevsky2012imagenet,vaswani2017attention}.  Furthermore, the class of functions that are compositionally sparse is broad, since functions that are efficiently computable by a Turing machine admit compositionally sparse representations \citep{poggio2025efficiently,danhofer2025position}. From a statistical learning perspective, the strength of this principle lies in its capacity to reduce sample complexity, that is the number of samples required to achieve a target accuracy,  therefore improving learning efficiency \citep{vapnik2013nature}.

Classical results establish that sample complexity scales exponentially with the input dimension, the so-called curse of dimensionality (CoD) \citep{donoho2000high}. 
For compositional models, however, deep neural network results show that the required sample size depends on the dimensions of the component functions rather than the ambient dimension, thereby alleviating the  CoD \citep{poggio2017and,schmidt2020nonparametric,kohler2021rate,nagler2026optimal}. 
These guarantees, however,  are expressed in terms of the network parameter count, which is required to be smaller than the sample size. 
Thus these results do not extend to modern overparameterized regimes, where number of  parameters  vastly exceed the number of training samples, as in large foundation models including GPT-4 \citep{achiam2023gpt}, Gemini \citep{team2023gemini}, and LLaMA \citep{touvron2023llama}. 
This discrepancy raises two fundamental questions: what complexity measure can appropriately characterize large-scale, overparameterized neural networks beyond parameter count? And how can one establish corresponding sample complexity guarantees for hierarchical learning in this regime?


Recent advances in optimization theory provide insight into the first question. 
In overparameterized regimes, empirical risk minimization admits infinitely many interpolating solutions, yet gradient-based methods exhibit an implicit bias toward low-complexity ones characterized by certain norms \citep{soudry2018implicit,hastie2022surprises, neyshabur2015norm,gunasekar2018implicit}. 
These findings indicate that effective complexity is governed by some norm structures across layers rather than raw parameter counts.


Based on these observations, we investigate the sparse compositionality principle by analyzing the sample complexity of learning sparse compositional functions with norm-constrained neural networks. Our main contributions are summarized as follows.
\begin{itemize}
    \item  We establish approximation rates for H\"older spaces characterized by Frobenius norm constraints, rather than by the parameter count typically used in classical approximation theory. This shifts the complexity measure from network size to norm-controlled capacity, thereby extending the analysis to the overparameterized regime.
    \item Building on the DAG-based formulation of sparse compositional functions, we further provide the first approximation rates for such models in terms of the norm of parameters of neural networks. Explicit rates for multi-index models, binary trees, and more general architectures demonstrate the mitigation of the curse of dimensionality.
    \item Moreover, we show that the sample complexity of learning sparse compositional functions depends exponentially on the intrinsic input dimension of the local constituent functions, rather than on the ambient dimension. This establishes that hierarchical locality mitigates the curse of dimensionality even when networks are highly overparameterized, but controlled through norm constraints.
\end{itemize}

The remainder of the paper is organized as follows. 
Section~\ref{sec:related work} reviews related work. 
In Section~\ref{sec:apx_single}, we establish norm-based approximation rates for both single H\"older spaces and sparse compositional H\"older spaces.
Section~\ref{sec:statistical} presents the statistical analysis and derives sample complexity bounds, with Subsection~\ref{sec:discussion} discussing broader perspectives and outlining potential research directions.
Finally, Section~\ref{sec:conclusion} concludes the paper.

\section{Related work}\label{sec:related work}

\paragraph{Compositionality principle}
Hierarchical compositional structure has been supported from multiple theoretical perspectives as a fundamental principle underlying the effectiveness of deep neural networks. 
In approximation theory, deep compositional networks, convolutional neural networks, and transformers have been shown to adapt to compositional targets, achieving rates governed by the intrinsic dimensionality of intermediate functions rather than the ambient input dimension \citep{mhaskar2017and,mao2023approximating,dahmen2025compositional,shi2025approximation}. 
A similar phenomenon appears in statistical learning theory. Sample complexity bounds for compositional models scale with the dimensions of constituent functions, thereby mitigating the curse of dimensionality \citep{schmidt2020nonparametric,kohler2021rate,huang2025learning}. 
Beyond those, this structure also yields computational advantages. Recent works show that gradient-based methods can efficiently learn structured models such as single- and multi-index models as well as staircase-type functions, with complexity characterized through quantities such as the {information exponent} and the {leap exponent} \citep{bietti2023learning,abbe2022merged,dandi2025computational,zhang2025neural,defilippis2026optimal}. Furthermore, \citet{poggio2024compositional,danhofer2025position} claim that sparse compositionality is implied by efficient Turing computability.

\paragraph{Complexity measures for neural networks}

The complexity of neural networks has been characterized through various measures aimed at controlling the generalization error, i.e., the discrepancy between population and empirical risk. 
Classical analyses primarily relied on parameter counting arguments and VC-dimension bounds \citep{baum1988size,anthony2009neural,yang2023nearly}, which are often inadequate for modern deep networks \citep{zhang2016understanding,rauchwerger2026dense}. 
Subsequent work introduced norm-, margin-, and sharpness-based measures to better capture capacity, see, e.g., \citep{neyshabur2015norm,arora2018stronger,keskar2016large}. 



\paragraph{Theoretical analysis of overparameterization}
The theoretical analysis of overparameterized neural networks has been extensively studied from optimization and generalization perspectives.  In the interpolation regime, gradient-based methods exhibit an implicit bias toward low-complexity solutions rather than arbitrary interpolants \citep{soudry2018implicit,gunasekar2018implicit,ji2020directional}. This phenomenon is closely related to benign overfitting and double descent behavior observed in modern deep models \citep{belkin2019reconciling,bartlett2020benign,hastie2022surprises}. Another influential line of work analyzes the infinite-width limit via the neural tangent kernel (NTK) and mean-field perspectives, characterizing training dynamics in highly overparameterized regimes \citep{jacot2018neural,chizat2019lazy,mei2019mean}. However, comparatively less is understood from an approximation and statistical efficiency viewpoints in overparameterized regimes.


\section{Approximation analysis of Frobenius norm-constrained NNs}\label{sec:apx_single}

In this section, we introduce the class of neural networks under consideration and establish their approximation properties, including explicit rates for approximating the  H\"older spaces and sparse compositional H\"older spaces.

\subsection{Neural networks with Frobenius norm constraint}\label{subsec:intro_NN_Fnorm}
Let \(D \in \mathbb{N}\) denote the depth of the neural network, and let
\(\{N_\ell\}_{\ell=0}^{D+1}\) denote the layer widths, where in particular
$N_0 = d$ corresponds to the input dimension and $N_{D+1}=k$ to the output dimension.
Consider functions $\phi : \R^d \to \R^k$ that can be parameterized by a deep ReLU neural network of the form
\begin{equation}\label{equ:relu_network}
   \phi(x) = A_D \sigma\!\big( A_{D-1} \sigma( \cdots \sigma( A_0 x + b_0 ) \cdots ) + b_{D-1} \big). 
\end{equation}
Here, the input $x\in \R^d$, linear matrix $A_\ell\in \R^{N_{\ell +1}\times N_{\ell}}$, bias $b_\ell \in \R^{N_{\ell+1}}$, and $\sigma(x) = \max\{0,x\}$ is the ReLU activation function applied componentwise.
Denote $\theta = ({A}_\ell, b_\ell)_{\ell=0}^D$ and let $f_\theta$ be the corresponding realized function.
The function class generated by neural networks of the form \eqref{equ:relu_network} with depth $D$ and maximal width $W := \max_{\ell} N_\ell$ is denoted by $\CN\CN(W,D)$. 
The Frobenius norm of a matrix $A = (a_{i,j}) \in \mathbb{R}^{m \times n}$ is given by
\(
\|A\|_F := \left( \sum_{i=1}^m \sum_{j=1}^n |a_{i,j}|^2 \right)^{1/2}.
\)
We equip the class $\CN\CN(W,D)$ with the multiplicative Frobenius norm constraint
\begin{equation}\label{equ:kappa_theta}
\kappa(\theta)
:= \|A_D\|_{F}\prod_{\ell=0}^{D-1} \|\tilde{A}_\ell\|_F=  \|A_D\|_{F} \prod_{\ell=0}^{D-1} 
\sqrt{ \|A_\ell\|_{F}^{2} + \|b_\ell\|_{2}^{2}+1 }, \quad \widetilde A_\ell=
\begin{pmatrix}
A_\ell & b_\ell\\
0 & 1
\end{pmatrix}.
\end{equation}
The corresponding norm-constrained function class is
\begin{equation}\label{equ:class_bias}
  \mathcal{N}\mathcal{N}(W,D,K)
:= \left\{ f_\theta \in \CN\CN(W,D) \;:\; \kappa(\theta) \le K \right\}.  
\end{equation}

The term $+1$ in $\kappa(\theta)$ in \eqref{equ:kappa_theta} is included to handle
bias terms in \eqref{equ:relu_network} through the standard homogeneous-coordinate representation.  Indeed,
by appending a constant coordinate as given in $\widetilde{A}_\ell$, a 
network with bias in \eqref{equ:relu_network} can be written as a bias-free augmented network on an input $(x^\top,1)^\top$.
Hence $\kappa(\theta)$ is the Frobenius product norm of the augmented network. This definition allows us to handle biases consistently
within norm-based capacity estimates for networks without bias \citep{neyshabur2015norm,bartlett2017spectrally,golowich2020size}; See Appendix~\ref{sec:pf_stats} for more details.
We impose a multiplicative norm constraint 
rather than an additive one. This choice reflects the compositional structure of deep networks: forward propagation and the resulting Lipschitz constant scale multiplicatively across layers.
Moreover, ReLU networks without bias are positively homogeneous and invariant under layer-wise rescaling, a symmetry preserved by product norms but not by additive constraints.
Generalization bounds based on Rademacher or margin analysis
naturally involve such products
\citep{neyshabur2015norm,bartlett2017spectrally,golowich2020size}.
We adopt the Frobenius norm at each layer, which measures the overall magnitude of parameters. Compared to spectral or path norms, it provides a simple and widely used measure corresponding to $\ell_2$ regularization (often corresponding to weight decay) in practice \citep{krogh1991simple,zhang2018three}.

\subsection{Approximation rates for H\"older spaces}\label{subsec:apx_s_H}

Let $\alpha=r+\beta$ with $r\in\mathbb{N}_0$ and $\beta\in(0,1]$. 
The {unit ball} of H\"older space $\mathcal{C}^\alpha([-1,1]^d)$ of order $\alpha$ consists of all functions 
$h:[-1,1]^d\to\mathbb{R}$ that are $r$ times continuously differentiable and satisfy
\[
\|h\|_{\mathcal{C}^\alpha}
:=
\sum_{|\gamma|\le r}
\|D^\gamma h\|_{L^\infty([0,1]^d)}
+
\max_{|\gamma|=r}
\sup_{x\neq y}
\frac{|D^\gamma h(x)-D^\gamma h(y)|}{\|x-y\|_\infty^\beta} \le {1}.
\]
Here $\gamma=(\gamma_1,\ldots,\gamma_d)\in\mathbb{N}_0^d$ is a multi-index with 
$|\gamma|=\sum_{i=1}^d \gamma_i$, and 
$D^\gamma h=\partial_1^{\gamma_1}\cdots\partial_d^{\gamma_d} h$. This unit-ball assumption is adopted for convenience and can be extended directly to any bounded ball.

The following theorem establishes the approximation rate of neural networks with Frobenius norm constraints for H\"older spaces, whose proof is presented in Appendix~\ref{subsec:pf_apx_s_H}. We write $A \lesssim B$ (resp.\ $A \gtrsim B$) if $A \le cB$ (resp.\ $A \ge cB$) for some $c>0$, and $A \asymp B$ if both hold.


\begin{theorem}[Approximation of H\"older spaces]\label{thm:apx_s_F}
For any $K\ge1$ and $h \in \mathcal{C}^\alpha([0,1]^{d})$ with $\alpha = r + \beta$, 
where $r \in \mathbb{N}_0$ and $\beta \in (0,1]$, 
there exists a neural network 
$\phi \in \mathcal{N}\mathcal{N}(W, D, K)$ 
with depth $D = 2\left\lceil \log_2({d}+r) \right\rceil+2$ 
and width $W \gtrsim K^{\frac{2{d}+\alpha}{{d}(D+1)+2}}$ 
such that
\[
\|h - \phi\|_{L^\infty([0,1]^d)} 
\;\lesssim\; K^{-\frac{2\alpha}{2+{d}\left(D+1\right)}}.
\]
\end{theorem}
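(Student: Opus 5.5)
We prove the theorem with a standard local Taylor approximation on a grid, realized by a shallow-in-$\log$ ReLU network, and then track how the Frobenius norms of the layers scale with the grid resolution. Fix a resolution parameter $N\in\mathbb{N}$, to be tied to $K$ at the end, and partition $[0,1]^d$ into $N^d$ cubes of side $1/N$. We use a partition of unity $\{\psi_m\}$ built from products of one-dimensional hat functions of width $\asymp 1/N$. On each cube we take the degree-$r$ Taylor polynomial $P_m$ of $h$ at the center. The approximant is
\[
\tilde h=\sum_m \psi_m P_m ,
\]
and it satisfies $\|h-\tilde h\|_{L^\infty}\lesssim N^{-\alpha}$ by the Hölder condition. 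Each term $\psi_m(x)\,(x-x_m)^\gamma$ is a product of at most $d+r$ factors, each depending on one coordinate: $d$ hat functions and $|\gamma|\le r$ monomial factors. We realize these products with a binary tree of approximate multiplications. This accounts for the depth $2\lceil\log_2(d+r)\rceil+2$: two layers to produce the hats and the shifted coordinates, and two layers per level of the product tree. At each tree node we implement approximate multiplication of bounded inputs with a small fixed-size ReLU gadget, namely a piecewise-linear interpolation of $xy=\tfrac14((x+y)^2-(x-y)^2)$ at a resolution matched to $N^{-\alpha}$. We realize it as a width-$O(N^{\alpha/?})$ one-hidden-layer block of ramps rather than the Yarotsky deep squaring, so that the depth stays fixed.

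The width count is straightforward: $N^d$ cubes times $O(1)$ monomials, times the width of the multiplication gadgets. This gives $W\asymp N^{d}\cdot\mathrm{poly}(N)$. To reconcile with the stated $W\gtrsim K^{(2d+\alpha)/(d(D+1)+2)}$, the gadgets must contribute roughly $N^{d+\alpha}$ extra units per level spread across coordinates. We only need existence, so any width at least this large suffices, padding with zero units.

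The main obstacle is the Frobenius norm accounting for $\kappa(\theta)$. We aim to show $\kappa(\theta)\lesssim N^{(d(D+1)+2)/2}$, which gives $N^{-\alpha}=K^{-2\alpha/(2+d(D+1))}$ when $N\asymp K^{2/(d(D+1)+2)}$. The heuristic is as follows. Each hidden layer is essentially block diagonal over the $N^d$ cells, and each block has entries of size $O(N)$ from hat slopes or $O(1)$ after rebalancing, so $\|\tilde A_\ell\|_F\lesssim N^{d/2}$ per layer. The output layer sums the $N^d$ cell contributions with Taylor coefficients bounded by $1$, giving $\|A_D\|_F\lesssim N^{d/2}$. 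The slope $N$ of the first-layer hats adds one further factor of $N$. Multiplying over the $D+1$ matrices gives $N^{d(D+1)/2}\cdot N$, which is exactly $N^{(d(D+1)+2)/2}$.

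To make this rigorous, we will exploit positive homogeneity of ReLU to rebalance large slopes between consecutive layers. We will also verify that the sharing of the coordinate hat functions across cells (the $d$ one-dimensional families of $N$ hats) keeps the first-layer norm at $O(N\cdot N^{1/2})$ rather than larger. The multiplication gadget layers must be checked to have $O(1)$ Frobenius norm per cell after scaling. Finally, we choose $N=\lfloor c K^{2/(d(D+1)+2)}\rfloor$, which is at least $1$ since $K\ge1$, and absorb constants into $\lesssim$.
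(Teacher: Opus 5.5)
Your construction and norm bookkeeping coincide with the paper's. You use a hat partition of unity with local Taylor terms, and you form products of at most $d+r$ factors with a binary tree of polarization multipliers built from one-hidden-layer ramp approximations of $t^2$. All $\asymp N^d$ terms run as parallel blocks, so each of the $D+1$ matrices contributes $N^{d/2}$, the hat slopes add one extra factor $N$, and $N\asymp K^{2/(d(D+1)+2)}$. There is, however, a genuine gap in the error analysis: you never bound the distance between the realized network and $\tilde h$. Each approximate product has uniform error $\epsilon$ on its whole domain, and there are $\asymp N^d$ of them with coefficients up to $1$. So with $\epsilon$ ``matched to $N^{-\alpha}$'' the naive bound is of order $N^{d-\alpha}$, which does not even tend to zero when $d>\alpha$. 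The paper closes this by exact localization: its multiplier satisfies $\psi_k(x,y)=0$ whenever $xy=0$. This comes from even square approximants in the polarization identity, followed by a clipping layer $\chi$ with $\chi(0)=0$; the clipping also keeps every tree input in $[-1,1]$, so that errors propagate as $e_{\ell+1}\le 2e_\ell+3k^{-2}$. The vanishing propagates up the tree, and the hats vanish exactly off their supports. Hence each realized term is zero outside $\{\|x-n/N\|_\infty\le 1/N\}$, only the terms attached to the at most $2^d$ active cells contribute at any $x$, and the network error is $O(k^{-2})$ with constants depending only on $d,r$. Your gadget $q\bigl(\tfrac{x+y}{2}\bigr)-q\bigl(\tfrac{x-y}{2}\bigr)$ with an even interpolant $q$ does have this property, but the proof must state and use it. Taking $\epsilon\asymp N^{-\alpha-d}$ instead would preserve the $K$-rate, since the gadget norm does not depend on $\epsilon$, but the width would then exceed the stated threshold.

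The step you defer, that each gadget has $O(1)$ norm per cell, is the heart of the theorem rather than a routine check, and it is the paper's first lemma. The midpoint ramp sum $\phi_k(x)=\frac1k\sum_{i=1}^k 2\sigma\bigl(x-\frac{2i-1}{2k}\bigr)$ approximates $x^2$ on $[0,1]$ within $1/(2k^2)$. Meanwhile $\|A_1\|_F=2/\sqrt k$ and $\|A_0\|_F^2+\|b_0\|_2^2+1=\frac{4k}{3}-\frac{1}{12k}+1$, so $\kappa\le 3$ for every $k$. Accuracy is therefore paid for in width only, never in norm. That, more than keeping the depth fixed, is why a shallow squaring is required. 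It is also what lets per-block rescaling plus the linear-combination bound $(M+1)^{(D+1)/2}\max_i K_i$ (for $M$ blocks with coefficients $|c_i|\le1$ and $K_i\lesssim N$) turn your heuristic into a proof. The same fact corrects your width count. Error $k^{-2}$ forces $k\asymp N^{\alpha/2}$ (your ``?'' is $2$), so the network has width $\asymp N^{d}k\asymp N^{d+\alpha/2}\asymp K^{(2d+\alpha)/(d(D+1)+2)}$, exactly the threshold, and zero-padding then covers every larger $W$. Gadgets contributing ``$N^{d+\alpha}$ extra units'' would overshoot the threshold, and padding cannot reduce width. Finally, drop the hat-sharing: it breaks the block-diagonal structure your rescaling relies on. Duplicating the $d$ hats and $r$ shifted coordinates inside each term's block, as the paper does, already yields the first-layer factor $N^{d/2}\cdot N$ in your count.
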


The theorem provides a parameter norm-based analogue of classical approximation rates. The resulting rate
is polynomial in $K$, improves with the H\"older smoothness $\alpha$, and
deteriorates with the input dimension $d$. Although the exponent also depends
on the depth $D$, it is relatively small and grows logarithmically with $d$.
It holds for neural networks with any width $W$ exceeds a polynomial threshold in $K$, thereby covering the model in the overparameterized regime.  In other words, once the width exceeds the
stated polynomial threshold in $K$, the approximation rate is governed by the
norm budget rather than by the width.
A key step in the proof is
to use shallow but wide architectures to approximate square functions, which
helps to obtain norm constraints independent of the width. This type of
width-independent norm control is not provided by classical approximation
results formulated in terms of parameter count, such as
\citep{yarotsky2017error,shen2019deep,lu2021deep}.
See the discussion following Lemma~\ref{lem:approx_x2} in Appendix~\ref{subsec:pf_intro_NN_F} for quantitative details. We also refer to \citep{yang2024deeper} for a comparison between wider and deeper architectures from approximation and generalization viewpoints.

The closest norm-based result is given in \citep{jiao2023approximation}, which studies approximation under operator-norm constraints.
In contrast, we work on Frobenius norms, which align more naturally with $\ell_2$ regularization in practice. 
Another related line of work studies approximation rates characterized explicitly by function-space norms in the infinite-width limit, where width no longer affects the rate. 
In this regime, integral representations of shallow neural networks have been well studied when endowed with total variation or Barron-type norms \citep{bach2017breaking,yang2024nonparametric,siegel2024sharp}. 
We highlight this line of work because it raises intriguing questions about the functional-analytic characterization of deep neural networks and its implications for norm-based approximation theory.




\subsection{Approximation rates for sparse compositional H\"older spaces}
In this section, we formalize the hierarchical compositional model as a class of sparse compositional functions using the notation of DAGs, and establish the approximation rates of neural networks under Frobenius norm constraints.
\subsubsection{Sparse compositional function} In hierarchical compositional functions, the dependency structure across levels can be highly heterogeneous: constituent functions may differ in their input dimensionality, interaction patterns, and structural complexity. Such variability makes a uniform layered description inadequate. Inspired by prior work \citep{mhaskar2017and,poggio2017and}, DAGs provide a flexible formalism to encode these non-uniform dependencies and varying local dimensions. 
Let $(\CV,\CE)$ be a DAG with nodes $\CV=\bigcup_{\ell=0}^L \CV^{(\ell)}$, where $\CV^{(0)}=\{x_1,\dots,x_d\}$ and $\CV^{(L)}=\{v^{(L)}_{\mathrm{out}}\}$. Let $|\CV^{(\ell)}|$ denote the number of nodes at level $\ell$. The edge set satisfies $\CE=\bigcup_{\ell=1}^L \CE^{(\ell)}$ with $\CE^{(\ell)}\subseteq \CV^{(\ell-1)}\times\CV^{(\ell)}$. For $v^{(\ell)}\in\CV^{(\ell)}$, define $\mathrm{pa}(v^{(\ell)})=\{u\in\CV^{(\ell-1)}:(u,v^{(\ell)})\in\CE^{(\ell)}\}$. We formalize a sparse compositional function $f: \R^d \to \R$ associated with a
  DAG as
\begin{align}\label{equ:cmp}
  f(x)= g_{\CV^{(L)}}\circ g_{\CV^{(L-1)}} \circ \cdots \circ g_{\CV^{(0)}}(x).   
\end{align}
 With a slight abuse of notation, we write the cardinality of the set $\mathrm{pa}(v^{(\ell)})$ as $d_{\text{in}(v^{(\ell)})}$ to represent the input dimension for the associated local function 
\(
g_{v^{(\ell)}}:\R^{d_{\text{in}(v^{(\ell)})}}\to\R.
\)
Note that the cardinality $|\CE^{(\ell)}| =\sum_i  d_{\text{in}(v_i^{(\ell)})}$.
Hence, 
\(
  g_{\mathcal{V}^{(\ell)}} :
\R^{|\CE^{(\ell)}|} \to \R^{|\CV^{(\ell)}|}\) with \(g_{\CV^{(\ell)}} =(g_{v_1^{(\ell)}}, \ldots, g_{v_{|\CV^{(\ell)|}}^{(\ell)}})
\)
is a collection of functions associated with $\ell$-th level nodes. A simple example of DAG illustration with three levels is given in the left of Figure~\ref{fig:dag_block}.

\begin{figure}[ht]
\centering

\begin{subfigure}[t]{0.48\linewidth}
\centering
\begin{tikzpicture}[scale=0.45, every node/.style={transform shape},
    base/.style={circle, draw, thick, minimum size=1.0cm, inner sep=0pt},
    input/.style={base, draw=red, fill=red!20},
    hidden1/.style={base, draw=blue, fill=blue!20},
    hidden2/.style={base, draw=yellow, fill=yellow!20},
    output/.style={base, draw=green, fill=green!20},
    arrow/.style={-Stealth, thick}
]
\node at (-1.2,0) [left] {$\ell=0$:};
\node at (-1.2,1.8) [left] {$\ell=1$:};
\node at (-1.2,3.6) [left] {$\ell=2$:};
\node at (-1.2,5.4) [left] {$\ell=3$:};

\node[input] (v10) at (0,0) {$v_1^{(0)}$};
\node[input] (v20) at (1.5,0) {$v_2^{(0)}$};
\node[input] (v30) at (3.0,0) {$v_3^{(0)}$};
\node[input] (v40) at (4.5,0) {$v_4^{(0)}$};
\node[input] (v50) at (6.0,0) {$v_5^{(0)}$};

\node[hidden1] (v11) at (0.75,1.8) {$v_1^{(1)}$};
\node[hidden1] (v21) at (3.0,1.8) {$v_2^{(1)}$};
\node[hidden1] (v31) at (5.25,1.8) {$v_3^{(1)}$};

\node[hidden2] (v12) at (1.5,3.6) {$v_1^{(2)}$};
\node[hidden2] (v22) at (4.5,3.6) {$v_2^{(2)}$};

\node[output] (vout) at (3.0,5.4) {$v_{\mathrm{out}}^{(3)}$};

\draw[arrow] (v10) -- (v11); \draw[arrow] (v20) -- (v11);
\draw[arrow] (v20) -- (v21); \draw[arrow] (v30) -- (v21);
\draw[arrow] (v40) -- (v31); \draw[arrow] (v50) -- (v31);
\draw[arrow] (v11) -- (v12); \draw[arrow] (v21) -- (v12);
\draw[arrow] (v21) -- (v22); \draw[arrow] (v31) -- (v22);
\draw[arrow] (v12) -- (vout); \draw[arrow] (v22) -- (vout);
\end{tikzpicture}
\end{subfigure}
\hfill
\begin{subfigure}[t]{0.48\linewidth}
\centering
\begin{tikzpicture}[scale=0.45, every node/.style={transform shape},
    input/.style={circle, draw=red, fill=red!20, thick, minimum size=1.0cm, inner sep=0pt},
    arrow/.style={-Stealth, thick},
    block1/.style={draw=blue!60, thick, rounded corners=4pt, fill=blue!5},
    block2/.style={draw=yellow!60, thick, rounded corners=4pt, fill=yellow!5},
    outblock/.style={draw=green!50!black, thick, rounded corners=4pt, fill=green!8},
    neuron1/.style={circle, draw=blue!70, fill=blue!20, inner sep=1.2pt},
    neuron2/.style={circle, draw=yellow!70, fill=blue!20, inner sep=1.2pt},
    outneuron/.style={circle, draw=green!50!black, fill=green!20, inner sep=1.2pt}
]

\node[input] (v10) at (0,0) {$x_1$};
\node[input] (v20) at (1.5,0) {$x_2$};
\node[input] (v30) at (3.0,0) {$x_3$};
\node[input] (v40) at (4.5,0) {$x_4$};
\node[input] (v50) at (6.0,0) {$x_5$};

\foreach \x/\id/\lbl in {0.75/11/1, 3.0/21/2, 5.25/31/3} {
  \begin{scope}[shift={(\x,1.8)}]
    \draw[block1] (-0.65,-0.65) rectangle (0.65,0.65);
    \node[neuron1] (n1) at (-0.35, 0.3) {}; \node[neuron1] (n2) at (-0.35, 0) {}; \node[neuron1] (n3) at (-0.35, -0.3) {};
    \node[neuron1] (n4) at (0, 0.2) {}; \node[neuron1] (n5) at (0, -0.2) {};
    \node[neuron1] (n6) at (0.35, 0) {};
    \draw (n1)--(n4); \draw (n2)--(n4); \draw (n2)--(n5); \draw (n3)--(n5); \draw (n4)--(n6); \draw (n5)--(n6);
    \node at (-0.1,-0.95) {$\phi_{\lbl}^{(1)}$};
  \end{scope}
  \node (v\id) at (\x,1.8) {};
}

\foreach \x/\id/\lbl in {1.5/12/1, 4.5/22/2} {
  \begin{scope}[shift={(\x,3.6)}]
    \draw[block2] (-0.65,-0.65) rectangle (0.65,0.65);
    \node[neuron2] (m1) at (-0.35, 0.3) {}; \node[neuron2] (m2) at (-0.35, 0) {}; \node[neuron2] (m3) at (-0.35, -0.3) {};
    \node[neuron2] (m4) at (0, 0.2) {}; \node[neuron2] (m5) at (0, -0.2) {};
    \node[neuron2] (m6) at (0.35, 0) {};
    \draw (m1)--(m4); \draw (m2)--(m4); \draw (m2)--(m5); \draw (m3)--(m5); \draw (m4)--(m6); \draw (m5)--(m6);
    \node at (0,-0.95) {$\phi_{\lbl}^{(2)}$};
  \end{scope}
  \node (v\id) at (\x,3.6) {};
}

\begin{scope}[shift={(3.0,5.4)}]
  \draw[outblock] (-0.65,-0.65) rectangle (0.65,0.65);
  \node[outneuron] (o1) at (-0.35, 0.2){};
  \node[outneuron] (o2) at (-0.35, -0.2){};
  \node[outneuron] (o3) at (0, 0) {}; 
  \node[outneuron] (o4) at (0.35, 0) {};
  \draw[draw=green!50!black] (o1)--(o3); 
  \draw[draw=green!50!black] (o2)--(o3); \draw[draw=green!50!black] (o3)--(o4);
  \node at (0,-0.95) {$\phi_{\mathrm{out}}^{(3)}$};
\end{scope}
\node (vout) at (3.0,5.4) {};

\draw[arrow] (v10) -- (v11); \draw[arrow] (v20) -- (v11);
\draw[arrow] (v20) -- (v21); \draw[arrow] (v30) -- (v21);
\draw[arrow] (v40) -- (v31); \draw[arrow] (v50) -- (v31);
\draw[arrow] (v11) -- (v12); \draw[arrow] (v21) -- (v12);
\draw[arrow] (v21) -- (v22); \draw[arrow] (v31) -- (v22);
\draw[arrow] (v12) -- (vout); \draw[arrow] (v22) -- (vout);
\end{tikzpicture}
\end{subfigure}

\caption{Illustration of a compositional function defined on a DAG (left) and its realization via block-structured neural networks (right), where each node corresponds to a local component.}
\label{fig:dag_block}
\end{figure}

    



We note that each node depends only on a subset of nodes from the previous level, rather than all of them. This sparsity induces a low-dimensional input for each constituent function. The compositional structure is sufficiently general to encompass many models studied in the literature. A fundamental example is the multi-index model $f(x)=g(Ax)$, which underlies classical dimension reduction \citep{li1991sliced,fukumizu2009kernel,klock2021estimating,chen2023kernel}, neural network adaptivity to low-dimensional structure \citep{bach2017breaking,parhi2022near}, and feature learning by gradient-based methods \citep{bietti2023learning,damian2024computational,radhakrishnan2024mechanism}. Another canonical instance is the binary tree compositional structure \citep{poggio2017and,mhaskar2017and}, widely used to demonstrate the expressive and statistical advantages of deep networks over shallow ones. Staircase functions \citep{abbe2022merged}, constructed via iterative compositions that progressively reveal information across layers, also fit naturally within the DAG framework and have been employed to characterize the computational benefits of depth through the notions of information and leap exponents. Moreover, \citet{zhang2025dag} employ a DAG framework to formalize the mathematical structure of reasoning in large language models.

\subsubsection{Approximation rates for compositional functions under norm-constrained NNs}
Next, we present the approximation rates for sparse compositional functions under the following assumption on each node function.
\begin{ass}\label{ass:smooth}
We assume the input $x \in [-1,1]^d$. 
Moreover, for $\ell =1, \ldots,L$, each node function $g_{v^{(\ell)}} \in C^{\alpha_{v^{(\ell)}}}
\big([-1,1]^{d_{\text{in}(v^{(\ell)})}}\big)$ 
with $\alpha_{v^{(\ell)}}=r_{v^{(\ell)}}+\beta_{v^{(\ell)}}$, where $r_{v^{(\ell)}} \in \mathbb{N}_0$ and $\beta_{v^{(\ell)}} \in (0,1]$.
\end{ass}
The above conditions are standard in approximation theory.
The restriction $x \in [-1,1]^d $ is without loss of generality, as any bounded domain can be rescaled accordingly. 
The regularity and boundedness assumptions are imposed at the level of individual node functions and are loose since only requires $\alpha_{v^{(\ell)}}>0$, whose input dimensions $d_{{\mathrm{in}}(v^{(\ell)})}$ may vary across the graph. In fact, The assumption that each node function is valued in the unit ball is made only for convenience and can be relaxed to any bounded range; see Appendix~\ref{subsec:proof_apx_cmp} for the rescaling argument.
The following theorem quantifies the approximation rate of sparse compositional functions in terms of node-level smoothness and dimensionality. The proof appears in Appendix~\ref{subsec:proof_apx_cmp}.
\begin{theorem}\label{thm:apx_cmp}
Let $f:[-1,1]^d \to \R$ be a sparse compositional function associated with a DAG
$(\mathcal V,\mathcal E)$ with $L$ levels as defined in \eqref{equ:cmp}.
Under Assumption~\ref{ass:smooth}, for any $K \ge 1$, there exists a neural network $\phi \in \CN\CN(W,D,K)$
with depth
\(
D = 2L \max_{\ell, v^{(\ell)}} \left\lceil \log_2\!\big(d_{\text{in}(v^{(\ell)})} + r_{v^{(\ell)}}\big) \right\rceil + 2L,
\)
such that, for any width
\(
W \ge \max_{\ell, v^{(\ell)}} 
K^{\frac{2 d_{\text{in}(v^{(\ell)})} + \alpha_{v^{(\ell)}}}{(D+L)\, d_{\text{in}(v^{(\ell)})} + 2L}},
\)
the approximation error satisfies 
\[
\|f - \phi\|_\infty \;\lesssim\;
\max_{v^{(\ell)}:\pi_{1 \to L}}
K^{-\frac{2\alpha_{v^{(\ell)}}^*}{2L + (D+L)\, d_{\text{in}(v^{(\ell)})}}},
\]
where $\pi_{1\to L}$ denotes the critical path selected by the error propagation in \eqref{equ:path} from level $1$ to level $L$, and $\alpha_{v^{(\ell)}}^* := \alpha_{v^{(\ell)}} \prod_{m=\ell+1}^{L} \min\{\alpha_{v^{(m)}}, 1\}$ is the effective regularity index along this path.
\end{theorem}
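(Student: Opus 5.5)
The plan is to build $\phi$ block by block, following the DAG, in the spirit of the right panel of Figure~\ref{fig:dag_block}. Each node function $g_{v^{(\ell)}}$ is approximated by a network $\phi_{v^{(\ell)}}$ from Theorem~\ref{thm:apx_s_F}. The blocks of one level are stacked in parallel, and the levels are composed in series. The errors are then propagated through the composition using the H\"older continuity of the downstream node functions.

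\textbf{Step 1 (node approximants).} Theorem~\ref{thm:apx_s_F} is stated on the unit cube, so I first rescale it to $[-1,1]^{d_{\mathrm{in}}}$ by an affine map; this changes only constants. For each node I take a per-node budget $K_v$ and obtain $\phi_v$ with depth $D_v=2\lceil\log_2(d_{\mathrm{in}(v)}+r_v)\rceil+2$ and error $\|g_v-\phi_v\|_\infty\lesssim K_v^{-2\alpha_v/(2+d_{\mathrm{in}(v)}(D_v+1))}$. Two adjustments follow.

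First, all blocks in a level must have the common depth $\bar D:=2\max\lceil\log_2(d_{\mathrm{in}}+r)\rceil+2$. I pad shallower blocks with identity layers $x=\sigma(x)-\sigma(-x)$. This multiplies the norm only by constants.

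Second, the outputs must stay in $[-1,1]$ so they are valid inputs for the next level. I append a clipping $\min\{\max\{t,-1\},1\}$, written with two ReLUs; this can be absorbed into the last padding layer and does not increase the error, since $g_v$ takes values in $[-1,1]$.

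The total depth is then $L\bar D$, plus the merge layers, which is $D=2L\max\lceil\cdot\rceil+2L$ as in the statement.

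\textbf{Step 2 (norm bookkeeping).} I place the blocks of level $\ell$ block-diagonally, feeding each block its parent coordinates through a selection matrix. For a block-diagonal matrix one has $\|\mathrm{diag}(A_i)\|_F^2=\sum_i\|A_i\|_F^2$. Each layer's augmented norm is therefore at most $(\sum_v \|\tilde A^{v}_j\|_F^2)^{1/2}$, which is at most $\sqrt{|\mathcal V^{(\ell)}|}\,\max_v\|\tilde A_j^v\|_F$.

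To control products I will use the rescaling invariance of ReLU. Each block is first balanced so that all of its layers carry norm $\kappa_v^{1/(D_v+1)}$. The product over one level is then at most $C\max_v K_v$, up to graph-size constants, and the product over the whole network is $\prod_\ell \max_v K_v$. I choose the budgets as $K_v=K^{(\bar D+1)/(D+L)}$. With this choice the exponent $2\alpha_v/(2+d(\bar D+1))$ turns into $2\alpha_v/(2L+(D+L)d_{\mathrm{in}})$, which matches the claimed rate. The width condition is inherited from Theorem~\ref{thm:apx_s_F} with $K_v$ in place of $K$.

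\textbf{Step 3 (error propagation).} Write $F_\ell$ for the true level-$\ell$ map and $\Phi_\ell$ for its network version. The error satisfies the recursion
\[
\|F_\ell-\Phi_\ell\|_\infty\le \max_v\|g_v-\phi_v\|_\infty+\max_v |g_v|_{C^{\min(\alpha_v,1)}}\,\|F_{\ell-1}-\Phi_{\ell-1}\|_\infty^{\min(\alpha_v,1)}.
\]
This holds because $g_v$ is Lipschitz when $\alpha_v\ge1$ and $\min(\alpha_v,1)$-H\"older otherwise, and because clipping keeps the inputs in the domain. Unrolling the recursion shows that the error committed at node $v^{(\ell)}$ is raised to the power $\prod_{m>\ell}\min\{\alpha_{v^{(m)}},1\}$ along each path to the output. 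Taking the maximum over paths gives the term $K^{-2\alpha^*_{v}/(\cdots)}$ on the critical path of \eqref{equ:path}, and the bound is dominated by the worst node along it.

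\textbf{Main obstacle.} I expect the hardest part to be the norm accounting in Step 2. The product norm is multiplicative, so the block-diagonal stacking, the $+1$ bias augmentation, the identity padding and the clipping layers must each contribute only $O(1)$ factors per layer. In addition, the per-level budgets must multiply to at most $K$ while the exponents match exactly. My first approach is to use layerwise rebalancing and absorb the constants into $\lesssim$. If the augmented $+1$ terms break this balancing, I would instead allocate $K^{1/L}$ per level and re-derive the exponent directly.
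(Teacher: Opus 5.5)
Your construction, budget split and error propagation follow the paper. You use node blocks from Theorem~\ref{thm:apx_s_F}, block-diagonal stacking within a level, serial composition across levels, and a per-level budget $K_v=K^{(\bar D+1)/(D+L)}=K^{1/L}$ (the paper's $K\asymp(\max_v K_v)^L$). The error is then controlled by a H\"older recursion unrolled along paths.

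\textbf{Gap: the balancing in Step~2.} This step fails as written. Under the bias-augmented norm, the ReLU rescaling symmetry multiplies $A_\ell$ by $c_\ell/c_{\ell-1}$ but multiplies $b_\ell$ by the cumulative factor $c_\ell$. So $\kappa$ is not invariant, and moving mass from the output layer into the hidden layers inflates the later biases.

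For example, $x\mapsto M\sigma(\sigma(x)-\tfrac12)$ has $\kappa=\tfrac{3}{\sqrt2}M$. After rescaling, the output weight is $M/c_1$ and the second hidden layer has bias $-c_1/2$. Equal layer norms $\lambda$ therefore force $\lambda^2\ge M/2$, so the balanced product is at least $(M/2)^{3/2}$.

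Your blocks have nonzero biases in their last hidden layers (the clipping neurons, for instance), so rescaling gives no balanced form for them. Your fallback, ``allocate $K^{1/L}$ per level'', is the allocation you already use and does not touch the problem.

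The point matters. Without a common layer profile across the blocks of a level, $\prod_j\max_v\|\tilde A^v_j\|_F$ can in the worst case be as large as $\prod_v K_v$. That would force $K_v\approx K^{1/\sum_\ell|\mathcal V^{(\ell)}|}$, replacing the factor $L$ in the exponent by the number of nodes.

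\textbf{Fix.} Use the paper's one-sided normalization (Proposition~\ref{prop:rescaling}). Make every hidden layer satisfy $\|A_\ell\|_F^2+\|b_\ell\|_2^2\le 1$ and put all of $\kappa_v$ into the output layer. This direction of rescaling is safe precisely because the cumulative bias factors are then at most $1$. All blocks then share the same profile, and Proposition~\ref{lem:concate} gives the level norm $(|\mathcal V^{(\ell)}|+1)^{\bar D/2}\sqrt{\sum_v K_v^2}\lesssim \max_v K_v$. Composition across levels (Proposition~\ref{lem:comp}) then costs only a factor of order $2^{\bar D/2}K_{\mathcal V^{(\ell)}}$ per level, and the rest of your plan goes through unchanged.

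\textbf{Two smaller points.}

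First, depth. No merge layers are needed: the output map of one level, the parent selection and the first map of the next level multiply into a single affine layer. The clip, however, does need its own hidden layer in blocks of maximal depth, where there is no padding layer to absorb it. The depth is then $D+L$ rather than $D$. The exponent is unaffected, since the clip only multiplies $K_v$ by a constant. The paper's proof does not clip between levels and does not check that the $\phi_v$ stay inside the next node's domain, so keeping the clip is the more careful choice.

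Second, the recursion. Your displayed recursion maximizes the two terms over the level separately, which only yields exponents $\prod_m \min_{w\in\mathcal V^{(m)}}\min\{\alpha_w,1\}$. To obtain $\alpha^*_{v^{(\ell)}}$ along a path you need the node-wise form $e_v\le \|g_v-\phi_v\|_\infty+\max_{u\in\mathrm{pa}(v)} e_u^{\min\{\alpha_v,1\}}$, as in \eqref{equ:error_iter}.
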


Some remarks are in order regarding the theorem.

\paragraph{Network construction}
The depth $D$ grows only logarithmically with the local structural quantities, up to the factor $L$ caused by the compositional structure. Hence, the
construction uses relatively shallow but sufficiently wide subnetworks. This is
important under a product norm constraint, since the approximation exponent
deteriorates with depth. The logarithmic-depth construction therefore keeps the
depth-dependent cost mild while allowing width to absorb the required
approximation complexity. The width $W$ can be taken arbitrarily large, so the result naturally covers the overparameterized regime. The construction crucially exploits the composition and concatenation properties of norm-constrained networks (Proposition~\ref{lem:elementary} in Appendix~\ref{subsec:pf_intro_NN_F}), allowing us to approximate node functions across different levels of the DAG with subnetworks (see the right panel of Figure~\ref{fig:dag_block}). 

\paragraph{Approximation rate}
The approximation rate is characterized explicitly in terms of the norm constraint $K$, rather than the parameter count. The path-wise maximum can be further upper-bounded by the worst case over all possible paths. The exponent depends on the singularity index $\alpha_{v^{(\ell)}}^*$ and the local input dimension $d_{\text{in}(v)}$ of the most restrictive node function, instead of the global smoothness $\alpha^*$ and ambient dimension $d$. The index $\alpha_{v^{(\ell)}}^*$ reflects the iterative accumulation inherent in H\"older compositions, a phenomenon also observed in \citep{juditsky2009nonparametric,schmidt2020nonparametric}. If the node functions are at least Lipschitz continuous, i.e., $\alpha_{v^{(\ell)}} \ge 1$, then the singularity index satisfies $\alpha_{v^{(\ell)}}^* = \alpha_{v^{(\ell)}}$, so the smoothness is inherited directly from each node rather than arising through iterative accumulation. Compared with Theorem~\ref{thm:apx_s_F}, the replacement of $d$ by the local input dimension $d_{\text{in}(v)}$ highlights the adaptivity of deep networks to sparse compositional structure and clarifies how hierarchical representations alleviate the curse of dimensionality. The compositional level $L$ contributes to the rate through both the
 depth $D$ and the denominator
in the approximation exponent. This reflects the
cost of propagating approximation errors through multiple compositions. Nevertheless, the dependence on $L$ is substantially
milder than the ambient-dimensional dependence in classical approximation
rates.  We provide explicit rates in Remark~\ref{remark:appx} to make this phenomenon concrete. However, the optimality of these rates in the norm-constrained setting is unclear, and establishing matching lower bounds remains an important direction for future work.


\paragraph{Comparison with prior work}
To the best of our knowledge, this is the first result establishing approximation rates for sparse compositional functions in terms of a norm constraint. In contrast, existing works analyze compositional function classes with guarantees expressed in terms of parameter counts $M$. In particular, \citep{schmidt2020nonparametric,bauer2019deep,kohler2021rate} study compositional H\"older spaces using deep neural networks and obtain rates of order $\max_v M^{-\alpha_v^*/d_{\text{in}(v)}}$. The same rate is derived for ridge functions (single-index models) with Lipschitz continuity using deep CNNs \citep{zhou2020theory,fang2020theory}. Moreover, \citet{mao2023approximating} investigate compositional polynomial spaces with deep CNNs. While all these works demonstrate that the rate depends exponentially on intermediate latent dimensions rather than the ambient dimension, their results are formulated exclusively in terms of parameter counts.

To clarify how the rate scales with dimension, we instantiate the bounds with representative choices of the level $L$ studied in prior work.
\br[Explicit approximation rates examples] \label{remark:appx}
We consider the following three examples.

\textbf{(Multi-index model)}
Consider the multi-index model $f(x)=g(Ax)$ with $A\in\mathbb{R}^{s\times d}$ and $s<d$ \citep{li1991sliced}. 
This induces a two-level compositional structure with $L=2$, where 
$d_{\text{in}(v^{(1)})}=d$ for all $v^{(1)}\in \CV^{(1)}$ (linear projection nodes) and 
$d_{\text{in}(v^{(2)})}=s$ for the nonlinear node. 
Although this structure is not sparse compositional, it still fits within our framework.
The approximation rate becomes
\[
\max_{v^{(\ell)}:\pi_{1 \to 2}}K^{-\frac{2\alpha^*_{v^{(\ell)}}}{4 + \left(4\left\lceil \log_2(s+r) \right\rceil+6\right)d_{\text{in}(v^{(\ell)})}}}
\lesssim
K^{-\frac{\alpha}{C_1 s}},
\]
where $\alpha=r+\beta$ denotes the regularity of $g$ and
\(
C_1 = 2\left\lceil \log_2(s+r) \right\rceil+5
\)
is independent of the ambient dimension $d$. 
Thus, the rate depends solely on the intrinsic dimension $s$, demonstrating the adaptivity of overparameterized networks.

\textbf{(Binary tree)}
   In the binary tree structure \citep{poggio2017and}, $L=\log_2 d$ and $d_{\text{in}(v)}=2$, yielding
\[
\max_{v^{(\ell)}:\pi_{1 \to L}} K^{-\frac{2\alpha^*_{v^{(\ell)}}}{\log_2 d \left(2+ d_{\text{in}(v)}\left(2\max_{v}\left\lceil \log_2({d_{\text{in}(v)}}+{r_{v}}) \right\rceil+3\right)\right)}}
=\max_{v^{(\ell)}:\pi_{1 \to L}} K^{-\frac{\alpha^*_{v^{(\ell)}}}{C_2\log_2d}},
\]
where 
\(
C_2=2\max_{v \in \CV}\left\lceil \log_2({2}+{r_{v}}) \right\rceil+4
\)
is independent of $d$. Thus, the dimension enters only through $\log_2 d$, mitigating the curse of dimensionality.

\textbf{(General constant level)} 
Alternatively, one may take $L=c$ as a constant independent of $d$ \citep{schmidt2020nonparametric}. Then the rate scales as
\[
\max_{v^{(\ell)}:\pi_{1 \to L}} K^{-\frac{2\alpha^*_{v^{(\ell)}}}{2c + \left(2c\max_{v}\left\lceil \log_2({d_{\text{in}(v})}+{r_{v}}) \right\rceil+3c\right)d_{\text{in}(v^{(\ell)})}}}
\lesssim 
\max_{v^{(\ell)}:\pi_{1 \to L}} K^{-\frac{\alpha^*_{v^{(\ell)}}}{C_3d_{\text{in}(v^{(\ell)})}}},
\]
where 
\(
C_3 = c\max_{v}\left\lceil \log_2({d_{\text{in}(v})}+{r_{v}}) \right\rceil+2.5
\)
is also independent of $d$. In this regime, the rate depends only on the intermediate dimensions $d_{\text{in}(v)}$, which are typically small, and is independent of the ambient dimension $d$.
Both regimes underscore the role of hierarchical structure in mitigating the curse of dimensionality from an approximation perspective.  
\er

\section{Statistical analysis for sparse compositional functions}\label{sec:statistical}

The goal of supervised learning with least squares loss is to estimate a minimizer of the expected risk
\begin{equation*}
\CR(f):=\EE_\rho[(f(x)-y)^2]
= \int_{X \times \R} (f(x)-y)^2 \md \rho(x,y)
\end{equation*}
over all measurable functions, based on a training sample set $\{(x_i, y_i)\}_{i=1}^n \overset{\text{i.i.d.}}{\sim} \rho^n$. Here $\rho$ is the unknown joint probability distribution of $X \times \R$.  The quality of a learning solution $\hat f$ is measured by the 
 {excess risk} \[\CR(\hat{f})-\CR(f_*),\] where  $f_*$ denotes a minimizer of $\CR(f)$. It quantifies the statistical error of $\hat f$ relative to the optimal predictor under the population distribution $\rho$, rather than on the training sample.
In this section, we establish excess risk bounds for the empirical risk minimizer $\hat{f}$ defined via empirical risk minimization
\begin{equation}\label{equ:erm}
   \hat{f} := \mathop{\arg\min}_{f \in \CN\CN_B(W, D, K)} \widehat{\CR}(f),
\quad
\widehat{\CR}(f) = \frac{1}{n} \sum_{i=1}^n (f(x_i) - y_i)^2. 
\end{equation}
Here, $\CN\CN_B(W, D, K)$ denotes neural networks with outputs uniformly bounded by $B>0$. 
Such realizations are obtained by composing the network from Theorem~\ref{thm:apx_cmp} with the clipping operator $\chi_B:\mathbb{R}\to[-B,B]$, given by
\(
\chi_B(x)=\sigma(x)-\sigma(-x)-\sigma(x-B)+\sigma(-x-B).
\)
This ensures bounded outputs and enables moment control; without loss of generality, we take $B=1$.

\begin{ass}\label{ass:boundeddata}
The input space $\mathcal X \subset \mathbb{R}^d$ satisfies $\|x\|\le 1$, and the output satisfies $|y|\le B$ for some $B>0$. Without loss of generality, we may assume $B=1$ by rescaling.
\end{ass}
The input constraint $\|x\|\le 1$ can be enforced by rescaling for bounded inputs. The boundedness of the outputs is standard and ensures moment control for applying concentration inequalities. 

The following theorem establishes the sample complexity of learning sparse compositional H\"older functions with neural networks under Frobenius norm constraints. The proof is provided in Appendix~\ref{sec:pf_stats}.
\begin{theorem}\label{thm:excess_rate}
Let $f_*:[-1,1]^d \to \R$ be a sparse compositional function defined by \eqref{equ:cmp}, associated with a DAG $(\mathcal V,\mathcal E)$ with $L$ levels.  
Under Assumptions~\ref{ass:smooth} and \ref{ass:boundeddata}, for any $n \in \NN_+$, there exists an estimator $\hat{f} \in \CN\CN_1(W,D,K)$ with depth
\(
D = 2L \max_{\ell, v^{(\ell)}} \left\lceil \log_2\!\big(d_{\text{in}(v^{(\ell)})} + r_{v^{(\ell)}}\big) \right\rceil + 2L ,
\)
such that, for any width
\(
W \ge \max_{\ell, v^{(\ell)}} 
K^{\frac{2 d_{\text{in}(v^{(\ell)})} + \alpha_{v^{(\ell)}}}{(D+L)\, d_{\text{in}(v^{(\ell)})} + 2L}},
\)
and norm constraint
\begin{equation}\label{equ:norm_bound}
K \asymp
\max_{v^{(\ell)}:\pi_{1 \to L}}
\left(n\right)^{
\frac12
\frac{2L+(D+L)d_{\text{in}(v^{(\ell)})}}{
2L+(D+L)d_{\text{in}(v^{(\ell)})}+4\alpha_{v^{(\ell)}}^*}},
\end{equation}
the excess risk satisfies, with probability at least $1 - \delta$,
\[
\CR(\hat f)-\CR(f_*)
\;\lesssim\;
\sqrt{\log \frac{4}{\delta}}
\max_{v^{(\ell)}:\pi_{1 \to L}}n^{-
\frac{2\alpha_{v^{(\ell)}}^*}{
2L+(D+L)d_{\text{in}(v^{(\ell)})}+4\alpha_{v^{(\ell)}}^*
}},
\]
where $\pi_{1\to L}$ denotes the critical path selected by the error propagation in \eqref{equ:path} from level $1$ to level $L$, and $\alpha_{v^{(\ell)}}^* := \alpha_{v^{(\ell)}} \prod_{m=\ell+1}^{L} \min\{\alpha_{v^{(m)}}, 1\}$ is the effective regularity index along this path.
\end{theorem}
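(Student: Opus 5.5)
The plan is the standard error decomposition for empirical risk minimization, with the approximation error taken from Theorem~\ref{thm:apx_cmp} and the estimation error controlled by a norm-based Rademacher bound that does not depend on the width. Let $\phi_K\in\CN\CN(W,D,K)$ be the network of Theorem~\ref{thm:apx_cmp}, and set $\tilde\phi_K=\chi_1\circ\phi_K$. Since $|f_*|\le 1$ and $\chi_1$ is $1$-Lipschitz and fixes $[-1,1]$, the same $L^\infty$ error holds for $\tilde\phi_K$. Appending $\chi_1$ adds one layer with fixed weights, so the product norm changes only by a constant factor, which I absorb into $K$ via the concatenation and composition rules of Proposition~\ref{lem:elementary}. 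Using $\widehat\CR(\hat f)\le\widehat\CR(\tilde\phi_K)$, I decompose
\[
\CR(\hat f)-\CR(f_*)\le \big[\CR(\hat f)-\widehat\CR(\hat f)\big]+\big[\widehat\CR(\tilde\phi_K)-\CR(\tilde\phi_K)\big]+\big[\CR(\tilde\phi_K)-\CR(f_*)\big].
\]
The last term equals $\|\tilde\phi_K-f_*\|_{L^2(\rho_X)}^2\le\|\tilde\phi_K-f_*\|_\infty^2$, which is at most $\max_{\pi_{1\to L}}K^{-4\alpha^*_{v}/(2L+(D+L)d_{\text{in}(v)})}$.

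For the two stochastic terms I use a uniform deviation bound over $\CN\CN_1(W,D,K)$. The loss $(f(x)-y)^2$ is bounded by $4$ and is $4$-Lipschitz in $f$ on $[-1,1]$. By McDiarmid's inequality and symmetrization, with probability at least $1-\delta$,
\[
\sup_f|\CR(f)-\widehat\CR(f)|\lesssim \mathfrak R_n(\CN\CN_1(W,D,K))+\sqrt{\log(4/\delta)/n}.
\]
Talagrand's contraction then removes both the loss and the clipping. To bound the Rademacher complexity, I rewrite each network in the bias-free augmented form on the input $(x^\top,1)^\top$, whose norm is at most $\sqrt2$. The constant coordinate is propagated through the blocks $\widetilde A_\ell$, and ReLU fixes $1$. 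I then apply the width-independent bound of \citet{golowich2020size} for Frobenius-product-norm networks, $\mathfrak R_n\lesssim K\sqrt{D}/\sqrt n$. This is the key point: the bound holds for arbitrary width, which is what allows the overparameterized $W$ of Theorem~\ref{thm:apx_cmp}. Since $D$ is treated as a constant, the excess risk is
\[
\lesssim \sqrt{\log(4/\delta)}\,\frac{K}{\sqrt n}+\max_{\pi_{1\to L}}K^{-\frac{4\alpha^*_v}{2L+(D+L)d_{\text{in}(v)}}}.
\]

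To balance the two terms, write $a=2L+(D+L)d_{\text{in}(v)}$ for the critical node. Setting $K/\sqrt n=K^{-4\alpha^*/a}$ gives $K=n^{\frac12\frac{a}{a+4\alpha^*}}$, which is \eqref{equ:norm_bound}. The resulting rate is $n^{-\frac{2\alpha^*}{a+4\alpha^*}}$, matching the statement. The width condition is then inherited directly from Theorem~\ref{thm:apx_cmp} at this value of $K$.

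I expect the main obstacle to be the Rademacher step. The bounds of \citet{golowich2020size} are stated for bias-free networks, so I need to check that the homogeneous-coordinate embedding really yields a network with the product norm $\kappa(\theta)$ defined in \eqref{equ:kappa_theta}. I also need the last-layer convention to match, since $A_D$ is not augmented. A second concern is that the maximum over paths commutes with the balancing. It does, because choosing $K$ for the slowest-rate node dominates the others up to constants. Finally, the dependence on $D$ in the Rademacher bound (a $\sqrt{D}$ factor) only affects constants, because $D$ does not grow with $n$.
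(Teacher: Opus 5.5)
Your proposal is correct and follows essentially the same route as the paper: clip the network of Theorem~\ref{thm:apx_cmp} with $\chi_1$, then decompose into two deviation terms plus the approximation error, and control the deviations by McDiarmid plus symmetrization and Ledoux--Talagrand contraction with Lipschitz constant $4$. The paper then uses the width-independent bound of \citet{golowich2020size} through the homogeneous-coordinate embedding of Lemma~\ref{lem:eqi_nn}, and balances $K/\sqrt n$ against $K^{-4\alpha^*/(2L+(D+L)d_{\text{in}})}$, exactly as you do. Your small deviations are harmless and slightly cleaner than the paper's version: you compare with the fixed clipped network $\chi_1\circ\phi_K$ instead of the population minimizer, and you track the $\sqrt2$ bound on $\|(x^\top,1)^\top\|$.
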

To the best of our knowledge, this is the first result establishing sample complexity guarantees for estimators generated by neural networks that cover the overparameterized regime for sparse compositional function classes. 
The depth $D$ scales linearly with the number of compositional levels $L$, while the width $W$ can be arbitrarily large beyond a minimal threshold, placing the model in the overparameterized regime. 
The learning rate is governed by the most restrictive node along the critical path and can be further upper-bounded by the worst-case node over all possible paths. 
The resulting rate depends on local structural complexity rather than the ambient dimension, thereby mitigating the curse of dimensionality. 
More explicit dimension dependence for representative compositional models is provided in Remark~\ref{remark:excess_risk}.
We further discuss this theorem below.


\paragraph{Scaling of the norm constraint $K$}
The choice of $K$ in \eqref{equ:norm_bound} follows from balancing approximation and estimation errors, corresponding to bias induced by hypothesis class restriction and variance arising from finite samples.
This trade-off is formalized via a standard error decomposition.
Define 
\(
\tilde{f} := \mathop{\arg\min}_{f \in \CN\CN_B(W, D, K)} \CR(f),
\)
then the excess risk
\begin{equation}\label{equ:errdcmp}
\CR(\hat{f}) - \CR(f_*) \leq 2 \sup_{f \in \CN\CN_1(W, D, K)} \left|  \CR(f)   -  \widehat{\CR}(f) \right| + \CR(\tilde{f}) - \CR(f_*).
\end{equation}
Equation \eqref{pf_errdcmp} provides its proof for completeness. The first term captures the estimation (sample) error, whereas the second term represents the approximation error.
Theorem~\ref{thm:apx_cmp} shows that the approximation error decays polynomially as $K^{-\gamma}$, where $\gamma$ depends on the singularity index $\alpha_v^*$ and the local input dimensions $d_{\mathrm{in}(v)}$. 
This rate decreases as $K$ increases, since a larger $K$ corresponds to a richer hypothesis class to approximate the target function.
The empirical Rademacher complexity $\widehat{\mathfrak{R}}_n(\CN\CN(W,D,K))$ provides a tool for controlling the estimation error. 
%
Indeed, \citet{golowich2020size} show that
\(
\widehat{\mathfrak{R}}_n(\CN\CN(W,D,K))\lesssim
K/\sqrt{n}.
\)
Related norm-based bounds have also been studied in \citep{neyshabur2015norm,bartlett2017spectrally,galanti2023norm,poggio2024compositional}.
Balancing $K^{-\gamma}$ and $K/\sqrt{n}$ yields the optimal scaling $K \asymp n^{1/(2(\gamma+1))}$, leading to the stated excess risk rate. 
Thus, $K$ acts as a continuous capacity parameter, analogous to model size in classical bias–variance tradeoffs but better suited to the overparameterized regime.


\paragraph{Comparison with parameter-count-based bounds and beyond}
Classical excess risk analyses based on parameter count $M$ balance approximation and estimation errors of the form
\( M^{-\tau} \) and \( {M}/{n} \), where $\tau>0$ depends on the singularity index and local input dimensions. 
Balancing these two terms yields the optimal choice $M \asymp n^{1/(1+\tau)}$, which is smaller than the sample size $n$, and leads to the rate
\(
n^{-\tau/(1+\tau)}.
\)  Variants of this approximation–estimation tradeoff appear throughout the theory of deep ReLU networks for nonparametric regression and compositional function classes \citep{bauer2019deep,schmidt2020nonparametric,kohler2022estimation,han2020depth}, as well as in theoretical analyses of deep convolutional networks based on approximation, consistency, and non-asymptotic excess-risk bounds \citep{zhou2020universality,feng2023generalization,kohler2023analysis}. Having compared approximation rates in the previous section, we now examine the estimation error in more detail. 
A key limitation of parameter-count-based bounds is that when $M > n$, the term $M/n$ no longer vanishes, and thus fails to capture the overparameterized regime \citep{bartlett2002rademacher,belkin2019reconciling}. 
Moreover, achieving fast rates of order $n^{-1}$ typically requires refined techniques such as local Rademacher complexity and variance-dependent bounds \citep{bartlett2005local,steinwart2008support,wainwright2019high}. 
This raises the question of whether similar refined analyses can be developed for norm-constrained neural networks. While such results exist for infinite-width shallow models \citep{yang2024nonparametric,siegel2024sharp}, extending localized fast-rate analyses to deep norm-constrained networks in the overparameterized regime remains largely open. 
Our rates exhibit the same qualitative dependence on smoothness and intrinsic dimension as classical results, but their optimality in the norm-constrained setting is still unclear.
\br[Explicit rates in representative settings]\label{remark:excess_risk}
We specialize the excess risk bound to several representative compositional structures considered above with the same constants in Remark~\ref{remark:appx}. 

\textbf{(Multi-index model)}
For the multi-index model $f(x)=g(Ax)$ with intrinsic dimension $s$ ($L=2$), the rate becomes 
\[ n^{-\frac{1}{2}\frac{2\alpha}{C_1 s +2 \alpha}}, \]
where $\alpha$ is the regularity of $g$ and $C_1 $ is independent of the ambient dimension $d$. 

\textbf{(Binary tree)}
For the binary tree structure with $L=\log_2 d$ and $d_{\text{in}(v)}=2$, the rate is 
\[ \max_{v^{(\ell)}:\pi_{1 \to L}} n^{-\frac{1}{2}\frac{2\alpha^*_{v^{(\ell)}}}{C_2 \log_2 d + 2\alpha^*_{v^{(\ell)}}}}, \]
so the dimensional dependence enters only through $\log_2 d$. 

\textbf{(General constant level)}
For constant level $L=c$, the rate becomes 
\begin{equation*}
    \max_{v^{(\ell)}:\pi_{1 \to L}} n^{-\frac{1}{2}\frac{2\alpha^*_{v^{(\ell)}}}{C_3 d_{\text{in}(v^{(\ell)})} + 2\alpha^*_{v^{(\ell)}}}}.
\end{equation*}

In all cases, the rates depend on the local dimension, indicating that hierarchical representations mitigate the curse of dimensionality even in the overparameterized regime.
\er

\subsection{Discussion}\label{sec:discussion}
\paragraph{Compositional and mixed smoothness perspectives}
Classical approximation and excess risk rates, $M^{-\alpha/d}$ and $n^{-2\alpha/(2\alpha + d)}$, depend on the interplay between smoothness $\alpha$ and dimension $d$ \citep{pinkus1999approximation,gyorfi2006distribution}, reflecting the {blessing of smoothness} and the {curse of dimensionality} \citep{donoho2000high}. This motivates two approaches to adaptivity in high dimensions: exploiting structure in dimensionality or in regularity. We focus on the former via the compositional principle, showing the rates depend on local dimensionality (see Remarks~\ref{remark:appx} and~\ref{remark:excess_risk}).
A complementary line studies function spaces with mixed derivatives, which limit high-order interactions across variables \citep{montanelli2019new,suzuki2018adaptivity}. Most results in this direction are based on parameter-count analyses and demonstrate mitigation of the curse of dimensionality. It remains unclear whether norm-constrained deep networks can exploit such structure or achieve generalization guarantees.

\paragraph{Explicit and implicit regularization}
The analysis in this work is based on norm-constrained empirical risk minimization \eqref{equ:erm}, corresponding to an {Ivanov regularization} scheme. 
In contrast, modern neural networks are often trained in the overparameterized regime without explicit constraints, achieving interpolation while still generalizing well, a phenomenon known as {benign overfitting} \citep{bartlett2020benign,belkin2021fit}. This behavior is commonly interpreted through the lens of implicit regularization, which can be idealized as
\(
\min_{\theta} \ \kappa(\theta)\) such that \( f_\theta(x_i)=y_i,
\)
see, e.g., \citep{neyshabur2014search,soudry2018implicit,gunasekar2018implicit}.
A classical alternative is {Tikhonov regularization},
\(
\min_{f \in \CN\CN(W,D)} \widehat{\CR}(f) + \lambda \kappa(\theta),\lambda > 0,
\)
which introduces a norm penalty in the objective and provides a natural bridge between explicit regularization and implicit bias. 
Understanding the convergence rates of Tikhonov-regularized estimators may provide a principled route to characterizing minimal-norm interpolation and implicit regularization in deep networks. Establishing such connections in overparameterized settings remains largely open.

\section{Conclusion}\label{sec:conclusion}
We provide a norm-based perspective on the compositional principle in deep neural networks, showing that both approximation and generalization can be characterized directly in terms of Frobenius norm constraints, beyond classical parameter-count-based analyses. 
Our results demonstrate that, for sparse compositional functions represented by DAGs, the learning rates are governed by local smoothness and intrinsic input dimensions along critical paths. In particular, the dependence on the ambient dimension is replaced by logarithmic or intermediate dimensions in representative compositional models, thereby mitigating the CoD. 
These findings extend to the overparameterized regime, where network width can be arbitrarily large beyond a minimal threshold. Overall, this work highlights that norm-constrained deep networks can exploit compositional structure to achieve statistically efficient learning in high dimensions.

\section*{Acknowledgments}
L. R. acknowledges the financial support of the European Commission (Horizon Europe grant ELIAS 101120237), and the Ministry of Education, University and Research (FARE grant ML4IP R205T7J2KP). T. P. acknowledges the support from AFOSR project (FA955024-1-0231).

{\small
\bibliographystyle{plainnat}
\bibliography{biblio}
}

\appendix

\section{Basic properties of Frobenius norm-constrained neural networks}\label{subsec:pf_intro_NN_F}

For any function $f\in \CN\CN(W,D,K)$, the following proposition shows that all intermediate layers can be rescaled with a unit Frobenius norm,
while the Frobenius norm of the final layer remains bounded by $K$, without changing the
realized function.
This property follows from the positive homogeneity of the ReLU activation and simplifies the subsequent proofs.

\begin{proposition}[Rescaling with Frobenius norm]\label{prop:rescaling}
Every $\phi \in  \mathcal{N}\mathcal{N}(W,D,K)$ can be written in the form \eqref{equ:relu_network} such that
\[
\|A_D\|_F \le K,
\quad
\sqrt{\|A_\ell\|_F^2 + \|b_\ell\|_2^2} \le 1,
\quad \text{for all } \ell = 0,\ldots,D-1.
\]
\end{proposition}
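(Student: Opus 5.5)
The plan is to rescale the hidden representations layer by layer, using positive homogeneity of the ReLU, $\sigma(tz)=t\sigma(z)$ for $t>0$. The scaling factors accumulate the factors $\sqrt{\|A_\ell\|_F^2+\|b_\ell\|_2^2+1}$ from \eqref{equ:kappa_theta}, and the accumulated product is pushed into the output layer $A_D$. The depth, the widths and the realized function do not change, so the rescaled network still lies in $\CN\CN(W,D)$.

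\textbf{Step 1 (scaling factors).} Let $\phi=f_\theta$ with $\theta=(A_\ell,b_\ell)_{\ell=0}^D$ and $\kappa(\theta)\le K$. Write $h_0=x$ and $h_{\ell+1}=\sigma(A_\ell h_\ell+b_\ell)$ for $\ell=0,\dots,D-1$, so that $\phi(x)=A_D h_D$. Set
\[
c_\ell:=\sqrt{\|A_\ell\|_F^2+\|b_\ell\|_2^2+1}\ \ge 1,
\qquad
s_0:=1,\qquad s_{\ell+1}:=s_\ell c_\ell .
\]
Then $s_\ell=\prod_{j<\ell}c_j\ge 1$ for every $\ell$.

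\textbf{Step 2 (new parameters).} Define
\[
A'_\ell:=\frac{s_\ell}{s_{\ell+1}}A_\ell=\frac{A_\ell}{c_\ell},
\qquad
b'_\ell:=\frac{b_\ell}{s_{\ell+1}},
\qquad
A'_D:=s_D A_D .
\]
We check by induction that the new hidden states satisfy $h'_\ell=h_\ell/s_\ell$. The base case $h'_0=x=h_0$ holds since $s_0=1$. For the inductive step, positive homogeneity gives
\[
h'_{\ell+1}=\sigma\!\left(\frac{A_\ell h_\ell+b_\ell}{s_{\ell+1}}\right)=\frac{h_{\ell+1}}{s_{\ell+1}}.
\]
Consequently $A'_D h'_D=A_D h_D$, so the realized function is unchanged.

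\textbf{Step 3 (norm bounds).} For each hidden layer, using $s_\ell\ge 1$,
\[
\|A'_\ell\|_F^2+\|b'_\ell\|_2^2=\frac{\|A_\ell\|_F^2}{c_\ell^2}+\frac{\|b_\ell\|_2^2}{s_\ell^2c_\ell^2}\le\frac{\|A_\ell\|_F^2+\|b_\ell\|_2^2}{c_\ell^2}<1 .
\]
For the output layer,
\[
\|A'_D\|_F=\|A_D\|_F\prod_{\ell=0}^{D-1}c_\ell=\kappa(\theta)\le K .
\]
This proves the proposition.

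The only delicate point is the bias term. A naive rescaling that divides each layer by its own norm would multiply the bias by the wrong factor. Tracking the cumulative factor $s_{\ell+1}$ on the bias, together with $s_\ell\ge 1$ (which comes from the $+1$ in $c_\ell$), is exactly what makes the bound in Step 3 hold.
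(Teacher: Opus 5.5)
Your proof is correct. Its skeleton matches the paper's: positive homogeneity of $\sigma$, the same weight rescaling $A_\ell/c_\ell$, and the product $\prod_{\ell<D}c_\ell$ absorbed into the output layer so that $\|A'_D\|_F=\kappa(\theta)$.

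The one substantive difference is the bias, and there your version is the right one. The paper divides $b_\ell$ only by the current layer's factor (your $c_\ell$) and asserts that all intermediate scalings cancel. This fails for $\ell\ge 1$: the incoming hidden vector has already been divided by $\prod_{j<\ell}c_j$, but the bias has not, so the realized function changes whenever a deeper bias is nonzero.

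For a concrete instance, take $D=2$ with scalar weights $A_0=A_1=A_2=1$, $b_0=0$, $b_1=-1$. The original network is $\sigma(\sigma(x)-1)$. The paper's rescaled network is $\sqrt{2}\,\sigma\bigl(\sigma(x)/\sqrt{2}-1\bigr)$. At $x=2$ these take the values $1$ and $2-\sqrt{2}$, so they differ.

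Your cumulative normalization $b'_\ell=b_\ell/s_{\ell+1}$ makes the induction $h'_\ell=h_\ell/s_\ell$ go through. Your use of $s_\ell\ge 1$, which comes from the $+1$ in $\kappa$, then keeps the bound $\|A'_\ell\|_F^2+\|b'_\ell\|_2^2<1$. So your argument repairs a genuine slip in the paper's proof rather than merely reproducing it. The statement itself, and its later uses in the concatenation and composition properties, are unaffected, since only the bounds you establish are needed there.
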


\begin{proof}[Proof of Proposition~\ref{prop:rescaling}]
Let $\phi \in \mathcal{N}\mathcal{N}(W,D,K)$ with parameter $\theta = ((A_\ell,b_\ell)_{\ell=0}^{D-1}, A_D)$ satisfying $\kappa(\theta)\le K$. Denote
\[
s_\ell := \sqrt{\|A_\ell\|_F^2 + \|b_\ell\|_2^2+1}, \quad \ell=0,\ldots,D-1, 
\qquad s_D := \|A_D\|_F.
\]

We construct a rescaled network $\hat{\theta} =( (\hat A_\ell, \hat b_\ell)_{\ell=0}^{D-1},\hat{A}_D)$ representing the same function. For $\ell=0,\ldots,D-1$, define
\[
\hat A_\ell := \frac{1}{s_\ell} A_\ell, \quad 
\hat b_\ell := \frac{1}{s_\ell} b_\ell,
\]
and set
\[
\hat A_D := \Big(\prod_{\ell=0}^{D-1} s_\ell\Big) A_D.
\]

We first verify that the realized function is unchanged. Using the positive homogeneity of ReLU, $\sigma(cx)=c\sigma(x)$ for all $c\ge0$, we can move scaling factors across layers. Indeed, for each hidden layer,
\[
\sigma(A_\ell x + b_\ell)
= s_\ell \, \sigma\!\left(\frac{1}{s_\ell} A_\ell x + \frac{1}{s_\ell} b_\ell \right).
\]
Propagating these factors through the network, all intermediate scalings cancel except for the product $\prod_{\ell=0}^{D-1} s_\ell$, which is absorbed into the last layer. Hence the network defined by $\hat{\theta}$ realizes the same function $\phi$.

Next, we check the norm bounds. By construction,
\[
\sqrt{\|\hat A_\ell\|_F^2 + \|\hat b_\ell\|_2^2}
= \frac{1}{s_\ell} \sqrt{\|A_\ell\|_F^2 + \|b_\ell\|_2^2}
\le 1, \quad \ell=0,\ldots,D-1.
\]
For the final layer,
\[
\|\hat A_D\|_F
= \Big(\prod_{\ell=0}^{D-1} s_\ell\Big) \|A_D\|_F
= \kappa(\theta) \le K.
\]

This completes the proof.
\end{proof}

The following proposition collects several elementary properties of neural networks under Frobenius norm constraints that will be used repeatedly throughout the paper. Analogous properties for alternative choices of $\kappa(\theta)$ have been established in prior work, such as \citep{schmidt2020nonparametric,jiao2023approximation}. 
\begin{proposition}[Elementary properties of $\CN\CN(W,D,K)$]  \label{lem:elementary}
Let $\phi_1 \in  \mathcal{N}\mathcal{N}(W_1,D_1,K_1)$ and $\phi_2 \in  \mathcal{N}\mathcal{N}(W_2,D_2,K_2)$.
\begin{enumerate}[label=(\theproposition.\arabic*),ref=\theproposition.\arabic*] 
 \item (Linear combination)  Suppose  $ D_1 = D_2=D$,
for any constants $c_1, c_2 \in \mathbb{R}$, the function 
\(
\phi = c_1 \phi_1 + c_2 \phi_2
\)
satisfies
\[
\phi \in \mathcal{N}\mathcal{N}\left(W_1+W_2, D, (\sqrt{3})^{D}\sqrt{(c_1 K_1)^2 + (c_2 K_2)^2} \right).
\]
More generally,  the linear combination of $N$ neural networks with same depth satisfies\\ $\phi \in \CN\CN\left(\sum_{i=1}^NW_i, D, \sqrt{N+1}^D \sqrt{\sum_{i=1}^N (c_iK_i)^2}\right)$.  \label{lem:linearcmb}
\item (Concatenation)
If  $ D_1 = D_2=D$, define the concatenation $\phi(x) := (\phi_1(x), \phi_2(x))$, 
then 
\[
\phi \in NN
\big(W_1 + W_2,  D,  (\sqrt{3})^{D}\sqrt{K_1^2 + K_2^2}\big).
\]
More generally, the concatenation for $N$ neural networks with same depth satisfies \\ $\phi \in \mathcal{N}\mathcal{N} \left( \sum_{i=1}^{N} W_i,  D,  \sqrt{N+1}^{D}\sqrt{\sum_{i=1}^{N} K_i^2}   \right)$. \label{lem:concate}
\item (Composition)
If $d_2=k_1$, then the composition \[\phi=\phi_2 \circ \phi_1 \in  \mathcal{N}\mathcal{N}\left(\max\{W_1,W_2\},D_1+D_2,\sqrt{2}^{D_1} K_2\sqrt{K_1^2+2}\right).\]
More generally, the composition of $N$ neural networks with aligned input and output dimensions satisfies $\phi\in \CN\CN\left( \max_{i}\{W_i\}, \sum_{i=1}^N D_i, \sqrt{2}^{\sum_{i=1}^{N-1}D_i}K_N\prod_{i=1}^{N-1} \sqrt{K_i^2+2}\right)$. \label{lem:comp}
\end{enumerate}    
\end{proposition}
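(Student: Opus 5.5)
The plan is to first apply Proposition~\ref{prop:rescaling} to both networks, so that every hidden layer $(A_\ell,b_\ell)$ satisfies $\|A_\ell\|_F^2+\|b_\ell\|_2^2\le 1$ and only the output layer carries the norm, $\|A_D^{(i)}\|_F\le K_i$. With this normalization, each augmented hidden matrix $\widetilde A_\ell$ has $\|\widetilde A_\ell\|_F^2\le 2$. All three statements then reduce to writing down an explicit block parameterization and evaluating $\kappa$ layer by layer. The constants $\sqrt3$, $\sqrt{N+1}$ and $\sqrt2$ should be exactly these per-layer Frobenius bounds.

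\textbf{Concatenation and linear combination.} For concatenation I would use block-diagonal weights: $A_\ell=\mathrm{diag}(A_\ell^{(1)},A_\ell^{(2)})$ for $\ell\ge1$, stacked input weights $A_0=\begin{pmatrix}A_0^{(1)}\\ A_0^{(2)}\end{pmatrix}$, stacked biases, and a block-diagonal output layer. This runs the two networks in parallel, with width $W_1+W_2$ and output $(\phi_1,\phi_2)$. Each hidden layer then has $\|A_\ell\|_F^2+\|b_\ell\|_2^2+1\le 1+1+1=3$, which gives the factor $(\sqrt3)^D$, while the output layer has $\|A_D\|_F=\sqrt{K_1^2+K_2^2}$. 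For linear combinations I would use the same hidden layers and replace the output layer by the row $(c_1A_D^{(1)},c_2A_D^{(2)})$, whose Frobenius norm is $\sqrt{(c_1K_1)^2+(c_2K_2)^2}$. With $N$ networks, each hidden layer has squared augmented norm at most $N+1$, which gives $\sqrt{N+1}^D$.

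\textbf{Composition.} The difficulty here is that the output layer $A^{(1)}_{D_1}$ of $\phi_1$ has no activation after it. I would merge it into the first layer of $\phi_2$, giving the merged layer $\bigl(A_0^{(2)}A_{D_1}^{(1)},\,b_0^{(2)}\bigr)$. The depth becomes $D_1+D_2$ and the width is $\max\{W_1,W_2\}$.

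To bound the merged layer I would choose the normalization differently: keep $\phi_1$'s hidden layers normalized (contributing $\sqrt2^{D_1}$ total) and put its norm $K_1$ into $A_{D_1}^{(1)}$, while $\phi_2$ keeps unnormalized factors $s_\ell^{(2)}$ with product $\le K_2$. The merged augmented matrix factorizes as
\[
\begin{pmatrix}A_0^{(2)} & b_0^{(2)}\\ 0&1\end{pmatrix}\begin{pmatrix}A_{D_1}^{(1)}&0\\0&1\end{pmatrix}.
\]
Its Frobenius norm is at most the product of the two factors' norms, so it is bounded by $\|\widetilde A_0^{(2)}\|_F\sqrt{K_1^2+1}$. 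Multiplying over all layers gives $\sqrt2^{D_1}K_2\sqrt{K_1^2+1}$, which is within the stated $\sqrt2^{D_1}K_2\sqrt{K_1^2+2}$; the $+2$ leaves slack for other normalization choices. The $N$-fold version then follows by induction. At each step the already-composed network plays the role of $\phi_1$, and its norm bound is carried into the next factor $\sqrt{K_i^2+2}$.

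The main obstacle I expect is exactly this composition bookkeeping. I need to check that submultiplicativity $\|XY\|_F\le\|X\|_F\|Y\|_F$ is applied to the augmented matrices in a way that does not double-count the "$+1$" coordinate. I also need to verify the claimed constant in the induction, that is, that composing an already composed network does not accumulate an extra $\sqrt2$ beyond $\sqrt2^{\sum_{i<N}D_i}$.
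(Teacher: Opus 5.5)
Your arguments for linear combination, concatenation and the two-network composition are correct and follow the paper's proof. You rescale by Proposition~\ref{prop:rescaling} and run the networks in parallel so that each hidden layer has $\|A_\ell\|_F^2+\|b_\ell\|_2^2+1\le N+1$. For composition you merge $A^{(1)}_{D_1}$ into the first layer of $\phi_2$ and use submultiplicativity. Your factorization $\widetilde A^{(2)}_0\,\mathrm{diag}(A^{(1)}_{D_1},1)$ is cleaner than the paper's $(A_0^{(2)},b_0^{(2)},1)\,\mathrm{diag}(A^{(1)}_{D_1},1,1)$ and gives $\sqrt{K_1^2+1}$ rather than $\sqrt{K_1^2+2}$. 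Your stacked first layer is also the correct way to share the input; the paper writes $A_0$ block-diagonally, which has the same Frobenius norm.

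The gap is in the $N$-fold composition, and your suspicion is justified: the induction you describe does not give the stated constant. Suppose the already composed network $\psi_i=\phi_i\circ\cdots\circ\phi_1$ plays the role of the inner network $\phi_1$. The two-network bound then rescales $\psi_i$, charging a factor $\sqrt2$ for each of its $D_1+\cdots+D_i$ hidden layers. At the same time, all of $K(\psi_i)$, which already contains $\sqrt2^{D_1+\cdots+D_{i-1}}$, is moved into its output layer. For $N=3$ this gives
\[
\sqrt2^{D_1+D_2}\,K_3\sqrt{2^{D_1}K_2^2(K_1^2+2)+2}.
\]
For large $K_2$ this exceeds the claimed $\sqrt2^{D_1+D_2}K_3\sqrt{K_1^2+2}\sqrt{K_2^2+2}$ by a factor close to $\sqrt2^{D_1}$. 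In general the power of $\sqrt2$ grows like $\sum_{i<N}(D_1+\cdots+D_i)$ instead of $\sum_{i<N}D_i$.

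The fix is to induct in the other direction. Write $\phi_N\circ\cdots\circ\phi_1=\psi\circ\phi_1$ with $\psi=\phi_N\circ\cdots\circ\phi_2$, so the composed network is the outer factor. The outer network enters the two-network bound only through its $\kappa$, with no rescaling penalty. Hence the inductive bound for $\psi$, multiplied by $\sqrt2^{D_1}\sqrt{K_1^2+2}$, is exactly $\sqrt2^{\sum_{i=1}^{N-1}D_i}K_N\prod_{i=1}^{N-1}\sqrt{K_i^2+2}$.

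Alternatively, argue directly. Rescale each of $\phi_1,\ldots,\phi_{N-1}$ separately, merge all $N-1$ interfaces at once, and bound the $i$-th merged layer by $\|\widetilde A_0^{(i+1)}\|_F\sqrt{K_i^2+1}$. Each $\phi_i$ with $i<N$ then contributes $\sqrt2^{D_i}$; for $i\ge2$ its first augmented matrix sits inside the preceding merged layer. Finally, $\phi_N$ contributes $\kappa(\theta_N)\le K_N$.
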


Since the Frobenius norm constraint is defined through the product over $D$ layers, the factors involving exponents of $D$ arise naturally. This also explains why our constructions favor shallower but wider networks.

\begin{proof}[Proof of Proposition~\ref{lem:elementary}] 

(Proof of Proposition~\ref{lem:linearcmb})
Let $\phi_1 \in  \mathcal{N}\mathcal{N}(W_1,D,K_1)$ and $\phi_2 \in  \mathcal{N}\mathcal{N}(W_2,D,K_2)$ be two ReLU networks of the same depth $D$.
Write their realizations as
\[
\phi_i(x) = A_D^{(i)} \sigma\!\big( A_{D-1}^{(i)} \sigma( \cdots \sigma( A_0^{(i)} x + b_0^{(i)} ) \cdots ) + b_{D-1}^{(i)} \big),
\quad i=1,2.
\]
By Proposition~\ref{prop:rescaling}, we have
\[
\| A^{(i)}_\ell\|_F^2+\|b_\ell^{(i)}\|_2^2 \le 1,
\quad \ell=0,\ldots,D-1.
\]

We construct a network that computes $\phi_1$ and $\phi_2$ in parallel.
For each hidden layer $\ell=0,\ldots,D-1$, define the block-diagonal augmented matrix
\begin{equation}\label{equ:linear2}
   A_\ell
=
\begin{pmatrix}
 A^{(1)}_\ell & 0 \\
0 & A^{(2)}_\ell
\end{pmatrix}, \quad
b_\ell= \begin{pmatrix}
    b^{(1)}_\ell \\
    b^{(2)}_\ell
\end{pmatrix}. 
\end{equation}
This network has width $W_1+W_2$ and depth $D$, and satisfies
\[
\| A_\ell\|_F^2+\|b_\ell\|_2^2+1
=
\| A_\ell^{(1)}\|_F^2+\|b_\ell^{(1)}\|_2^2 
+
\| A_\ell^{(2)}\|_F^2+\|b_\ell^{(1)}\|_2^2+1
\le
3.
\]

At the output layer, define
\[
A_D := \big(c_1 A^{(1)}_D,\; c_2 A^{(2)}_D\big),
\]
so that the resulting network computes
\(
\phi(x)=c_1\phi_1(x)+c_2\phi_2(x).
\)
Moreover,
\[
\|A_D\|_F^2
\leq
(c_1K_1)^2 + (c_2K_2)^2.
\]
Combining the above bounds over all layers yields
\[
\phi \in NN \Big(W_1+W_2,   D,  
(\sqrt{3})^{D}\sqrt{(c_1K_1)^2+(c_2K_2)^2}\Big).
\]
Moreover, the linear combination of $N$ neural networks yields hidden-layer
matrices $A_\ell$ and bias vectors $b_\ell$ with $N$ blocks as~\eqref{equ:linear2}. Combining this
block construction with the rescaling proposition, we have
$\|A_\ell\|_F^2+\|b_\ell\|_2^2+1 \le N+1$. The output matrix is given by
$A_D := \big(c_1 A_D^{(1)},\ldots,c_N A_D^{(N)}\big)$. Combining these
estimates with the definition of $\kappa(\theta)$ completes the proof.

(Proof of Proposition~\ref{lem:concate}) We present the proof for the concatenation of two neural networks; the general case of \(N\) networks follows by the same argument. construct a network $\phi=(\phi_1,\phi_2)$ by running the two networks in parallel.
For each hidden layer $\ell=0,\ldots,D-1$, the  augmented matrix is block-diagonal and given by
\[
 A_\ell
:=
\begin{pmatrix}
A^{(1)}_\ell & 0\\
0 &  A^{(2)}_\ell
\end{pmatrix}, \quad
b_\ell= \begin{pmatrix}
    b^{(1)}_\ell \\
    b^{(2)}_\ell
\end{pmatrix}
\]
and the final layer is
\[
A_D := \begin{pmatrix}  
A^{(1)}_D &0 \\ 0& A^{(2)}_D \end{pmatrix}.
\]
The resulting network has depth $D$ and width $W_1+W_2$, and clearly realizes
$\phi(x)=(\phi_1(x),\phi_2(x))$.

Moreover, for $\ell=0,\ldots,D-1$, the rescaling proposition gives
\[
\| A_\ell\|_F^2 + \|b_\ell\|_2^2+1
=
\| A^{(1)}_\ell\|_F^2+\|b_\ell^{(1)}\|_2^2+\| A^{(2)}_\ell\|_F^2
+\|b_\ell^{(2)}\|_2^2+1\le 3.
\]
For the last layer,
\[
\| A_D\|_F^2 = \| A^{(1)}_D\|_F^2+\|A^{(2)}_D\|_F^2\le K_1^2 +K_2^2.
\]
Therefore,
\[
\| A_D\|_F\prod_{\ell=0}^{D-1}\sqrt{\| A_\ell \|_F^2 +\|b_\ell\|_2^2} 
\le
(\sqrt{3})^{D}\sqrt{K_1^2+K_2^2}.
\]
The extension to $N$ networks follows by the same parallel construction with an $N$-block
diagonal matrix at each layer, giving the factor $(\sqrt{N+1})^{D}$ and the bound
$K=(\sqrt{N+1})^{D}\sqrt{\sum_{i=1}^N K_i^2}$.    

(Proof of Proposition~\ref{lem:comp})
Assume $d_2 = k_1$ so that the composition $\phi = \phi_2 \circ \phi_1$ is well-defined.
We construct the composed network by stacking the layers of $\phi_2$ after those of $\phi_1$. The resulting network has depth $D_1 + D_2$ and width at most $\max\{W_1,W_2\}$.
For $\ell=0,\ldots,D_1-1$, we keep the layers of $\phi_1$, and append the layers of $\phi_2$ afterward.
The key step is the interface layer. At the transition, the mapping becomes
\[
x \mapsto \sigma\big(A_0^{(2)} \phi_1(x) + b_0^{(2)}\big).
\]
At layer $D_1$, the augmented matrix will be
\[
\widetilde{A}_{D_1}=A^{(2)}_0A^{(1)}_{D_1}, \quad \tilde b_{D_1}=b_0^{(2)} .
\]
Note that $\|\widetilde{A}_{D_1}\|_F^2+\|\widetilde{b}_{D_1}\|_2^2+1=\left\|\left( A_0^{(2)} A_{D_1}^{(1)},\, b_0^{(2)},1 \right)\right\|_F^2$ can be estimated by
\[
\left\|\left( A_0^{(2)},\, b_0^{(2)},1 \right)
\begin{pmatrix}
A_{D_1}^{(1)} & 0 &0 \\
0 & 1 &0\\
0&0&1
\end{pmatrix}
\right\|_F^2
\le
\left\|\left( A_0^{(2)},\, b_0^{(2)},1 \right)\right\|_F^2
\left\|
\begin{pmatrix}
A_{D_1}^{(1)} & 0 &0\\
0 & 1 &0\\
0 & 0& 1
\end{pmatrix}
\right\|_F^2.
\]
By Proposition~\ref{prop:rescaling}, 
the Frobenius norm of the composed network satisfies
\begin{align*}
\kappa(\theta)
&\le
\left(\prod_{\ell=0}^{D_1-1} \sqrt{\|A_\ell^{(1)}\|_F^2+\|b_\ell^{(1)}\|_2^2+1}\right)
\cdot
\sqrt{\|A_{D_1}^{(1)}\|_F^2 + 2} \cdot
K_2 \\
&
\le 2^{D_1/2} K_2 \sqrt{K_1^2 + 2}.
\end{align*}
The general case of $N$ compositions follows by induction and it completes the proof.

\end{proof}

\section{Approximation ability of Frobenius norm-constrained neural networks}

\subsection{Proof of Subsection~\ref{subsec:apx_s_H}}\label{subsec:pf_apx_s_H}
Based on the elementary properties of Frobenius norm-constrained neural networks, we are now ready to establish the approximation rate for H\"older spaces, starting with the approximation of the square function. 
\begin{lemma}[Approximation of $x^2$]
\label{lem:approx_x2}
For any $k \in \mathbb{N}$, there exists $\phi_k \in \mathcal{N}\mathcal{N}(k,1,3)$, such that 
$\phi_k(x)=0$ for $x\le0$; $\phi_k(x)\in[0,1]$ for $x\in[0,1]$; and 
$\lvert x^2-\phi_k(x)\rvert\le \tfrac{1}{2k^2}$ for $x\in[0,1]$.
\end{lemma}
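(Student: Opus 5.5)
The plan is to realize the standard piecewise-linear interpolant of $x^2$ on the uniform grid $\{j/k\}_{j=0}^{k}$ as a one-hidden-layer ReLU network of width $k$. Then I would use the positive homogeneity of $\sigma$, neuron by neuron, to rebalance the weights so that the product norm $\kappa(\theta)$ stays below an absolute constant, independent of $k$. This is exactly why a shallow but wide construction is norm-efficient.

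\textbf{Step 1 (interpolant).} Let $t_j=j/k$ and let $I_k$ be the continuous piecewise-linear function that agrees with $x^2$ at every $t_j$, $j=0,\dots,k$. Its slope on $[t_{j},t_{j+1}]$ is $(2j+1)/k$, so
\[
I_k(x)=\sum_{j=0}^{k-1} c_j\,\sigma(x-t_j),\qquad c_0=\tfrac1k,\quad c_j=\tfrac{2}{k^2}\ (j\ge1).
\]
Since $x\le 0$ kills every term, $I_k(x)=0$ for $x\le0$. On $[0,1]$, $I_k$ is convex and lies between consecutive values $t_j^2$ and $t_{j+1}^2$, so $I_k(x)\in[0,1]$. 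The classical linear-interpolation error for a function with second derivative $2$ on a mesh of size $h=1/k$ is $\tfrac{2}{8}h^2=\tfrac{1}{4k^2}\le\tfrac{1}{2k^2}$.

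\textbf{Step 2 (norm balancing).} Written naively, with $A_0=(1,\dots,1)^\top$ and $b_0=(-t_j)_j$, the first-layer norm grows like $\sqrt{k}$, which is unacceptable. Instead, for scalars $s_j>0$ I would write $c_j\sigma(x-t_j)=(c_j/s_j)\,\sigma(s_jx-s_jt_j)$. This gives
\[
\kappa(\theta)=\Big(\sum_j \tfrac{c_j^2}{s_j^2}\Big)^{1/2}\Big(\sum_j s_j^2(1+t_j^2)+1\Big)^{1/2}.
\]
I would choose $s_j^2=\lambda\,c_j/\sqrt{1+t_j^2}$, which is the Cauchy--Schwarz equality choice. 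With $S:=\sum_j c_j\sqrt{1+t_j^2}$, this yields $\kappa(\theta)=\sqrt{S^2+S/\lambda}$.

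\textbf{Step 3 (bounding $S$).} Using $\sqrt{1+t_j^2}\le\sqrt2$, we get $S\le\sqrt2\big(\tfrac1k+\tfrac{2(k-1)}{k^2}\big)\le 2\sqrt2<3$. Taking $\lambda$ large enough then gives $\kappa(\theta)\le3$, so $\phi_k:=I_k\in\mathcal{N}\mathcal{N}(k,1,3)$.

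The only delicate point is Step 2. One has to see that the width-dependent growth of the first layer can be traded against the small outer coefficients $c_j=O(k^{-2})$, and that the quantity which is actually controlled is $\sum_j|c_j|\sqrt{1+t_j^2}$. This sum is essentially the total variation of the slope of $I_k$, which equals $2$, and not the number of neurons $k$. The remaining steps are routine.
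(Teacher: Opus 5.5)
Your proof is correct once one arithmetic slip is fixed, and it takes a different route from the paper. The slip: the slope increments of the nodal interpolant are $c_0=1/k$ and $c_j=\frac{2j+1}{k}-\frac{2j-1}{k}=\frac{2}{k}$ for $j\ge1$, not $2/k^2$. With $2/k^2$ the sum would not reproduce $x^2$ at the nodes. Step 3 still goes through: $S\le\sqrt2\sum_j c_j=\sqrt2\,(2-\tfrac1k)<2\sqrt2$, which is exactly your closing remark that $\sum_j c_j$ is the total variation of the slope. However, the phrase ``small outer coefficients $c_j=O(k^{-2})$'' should read $O(k^{-1})$; what matters is only that their sum is $O(1)$.

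The paper does not interpolate at the nodes. It takes the midpoint Riemann sum of $x^2=\int_0^1 2\sigma(x-b)\,db$, namely $\phi_k(x)=\frac2k\sum_{i=1}^k\sigma\bigl(x-\frac{2i-1}{2k}\bigr)$, which is the piecewise tangent-line approximant at the points $i/k$. It quotes an existing lemma for the error and keeps the naive parameters $A_0=(1,\dots,1)^\top$ and $A_1=(2/k,\dots,2/k)$. This gives the closed form $\kappa(\theta)=2\sqrt{4/3+1/k-1/(12k^2)}\le 3$.

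So your claim that the unit-inner-weight parametrization is ``unacceptable'' is wrong. The $\sqrt k$ growth of the first layer is cancelled by $\|A_1\|_F\approx 2/\sqrt k$, and that cancellation is the whole norm argument in the paper. The same holds for your interpolant: with $s_j=1$ one gets $\kappa(\theta)^2=\frac{(4k-3)(8k^2+3k+1)}{6k^3}<\frac{16}{3}$. Hence Step 2, which you call the only delicate point, is not needed here.

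Step 2 is nonetheless valid, and it buys something the paper's direct computation does not. For any one-hidden-layer net $\sum_j c_j\sigma(x-t_j)$, neuron-wise positive rescaling brings the product norm arbitrarily close to $\sum_j|c_j|\sqrt{1+t_j^2}$, and by Cauchy--Schwarz never below it. This is a weighted total variation of the derivative. It explains the width-independence conceptually and would apply verbatim to non-uniform knots or unequal outer weights, though the numerical gain over unit inner weights is negligible in this lemma. The paper's version is shorter and fully explicit. Yours has the added merit of a self-contained error bound, since $I_k(x)-x^2=(x-t_j)(t_{j+1}-x)\le\frac1{4k^2}$ on each $[t_j,t_{j+1}]$.
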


Our proof follows a similar strategy to \citet{jiao2023approximation}, who study approximation rates under the operator norm $\|A\| := \sup_{\|x\|_\infty \le 1} \|Ax\|_\infty$. Although both approaches construct shallow, wide neural networks based on Riemann sum approximations and achieve the same norm bound $3$, our construction relies on the Frobenius norm, which is more aligned with practical implementations. Compared with the approximation results in \citep{yarotsky2017error}, the key differences of our result are as follows.
First, Yarotsky constructs an approximation to $x^2$ using a deep ReLU network via composition of sawtooth-type functions, whereas our Lemma~\ref{lem:approx_x2} uses a shallow but wide construction. 
Second, Lemma~\ref{lem:approx_x2} achieves the approximation rate
$\varepsilon \asymp k^{-2}$ with a uniform Frobenius norm bound $K \le 3$ independent of the width $k$. 
In contrast, in Yarotsky's deep compositional construction, 
the accuracy satisfies $\varepsilon \asymp 2^{-2L}$ for depth $L$, and the resulting multiplicative Frobenius constraint satisfies 
\(
K = \prod_{\ell=1}^L(2^{-2\ell} \sqrt{2^2+(-4)^2+2^2}\sqrt{3+\tfrac54+1}))\lesssim 2^{-L(L+1)/2} 12^L.
\)
Equivalently, $K \asymp \exp\!\big(-c(\log(1/\varepsilon))^2\big)$ for some constant $c>0$. 
Thus, the approximation accuracy and the norm budget are intrinsically coupled in a sub-exponential manner.


\begin{proof}[Proof of Lemma~\ref{lem:approx_x2}]
For $x\in[0,1]$, note that
\[
x^2 = \int_0^x 2(x-b) db = \int_0^1 2 \sigma(x-b) db,
\]
where $\sigma(t)=\max\{0,t\}$ denotes the ReLU activation.
Fix $k\in\mathbb{N}$ and define
\begin{equation}\label{eq:phi-k-square}
\phi_k(x)
:= \frac{1}{k}\sum_{i=1}^k 2 \sigma \left(x-\frac{2i-1}{2k}\right),
\quad x\in\mathbb{R}.
\end{equation}
The function $\phi_k$ is the midpoint Riemann sum approximation of the above integral
using $k$ equally spaced points in $[0,1]$.
Clearly, $\phi_k(x)=0$ for $x\le 0$, and $\phi_k(x)\in[0,1]$ for all $x\in[0,1]$.
Following the proof of \citep[Lemma 3.3]{jiao2023approximation}, the approximation error can be bounded by
\[
\sup_{x\in[0,1]} |x^2-\phi_k(x)|
\le \frac{1}{2k^2}.
\]
The function $\phi_k$ can be realized by a ReLU network with one hidden layer and width
$k$. 
The corresponding parameters are
\[
A_0 = (1,\ldots,1)^\top \in \mathbb{R}^{k\times 1},
\quad
b_0 = \left(-\frac{2i-1}{2k}\right)_{i=1}^k \in \mathbb{R}^k,
\quad
A_1 = \left(\frac{2}{k},\ldots,\frac{2}{k}\right)\in\mathbb{R}^{1\times k},
\]
The above construction satisfies
\[
\| A_1\|_F = \frac{2}{\sqrt{k}},
\]
and since $\sum_{i=1}^k (2i-1)^2=\sum_{i=1}^k 4i^2+\sum_{i=1}^k 1- \sum_{i=1}^k 4i= 4\cdot \frac{k(k+1)(2k+1)}{6}
- 4\cdot \frac{k(k+1)}{2}
+ k = \frac{4k^3 - k}{3},$ there holds
\begin{equation*}\label{eq:A0b0-norm}
\begin{aligned}
\|A_0\|_F^2 + \|b_0\|_2^2+1 = k + \frac{1}{4k^2}\sum_{i=1}^k (2i-1)^2 +1
= \frac{4k}{3} - \frac{1}{12k}+1 ,
\end{aligned}
\end{equation*}
Therefore,
\[
\kappa(\theta) = \frac{2}{\sqrt{k}} \sqrt{\frac{16k^2+12k-1}{12k}}.
\]
Since this quantity is strictly decreasing for all $k \ge 1$, it follows that
$\kappa(\theta) \le 3$. 
Hence, one may choose $K = 3$ independently of the width $k$, which completes the proof.
\end{proof}

Note that multiplication can be expressed as a linear combination of square functions, namely
$xy = 2\big(((x+y)/2)^2 - (x/2)^2 - (y/2)^2\big).$
The scaling of $x+y$ ensures that each argument lies in the interval $[-1,1]$.
We then approximate $xy$ by applying the linear combination property (Proposition~\ref{lem:linearcmb}) together with Lemma~\ref{lem:approx_x2}.

\begin{lemma}[Approximation for $xy$]\label{lem:approx_xy}
    For any $k \in \mathbb{N}$, there exists a network 
$\psi_k \in \mathcal{N}\mathcal{N}(6k,2,360)$ with 
$\psi_k : [-1,1]^2 \to [-1,1]$ such that 
\[
|xy - \psi_k(x,y)| \le \frac{3}{k^2}, \quad \forall  (x,y) \in [-1,1]^2,
\]
and $\psi_k(x,y)=0$ whenever $xy=0$. 
\end{lemma}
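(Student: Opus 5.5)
The plan is to build $\psi_k$ in two stages. First, a one-hidden-layer network of width $6k$ realizes the polarization identity $xy = 2\big(((x+y)/2)^2 - (x/2)^2 - (y/2)^2\big)$, with each square replaced by a symmetric version of $\phi_k$ from Lemma~\ref{lem:approx_x2}. Second, this network is composed with the clipping map $\chi_1$, which enforces the range $[-1,1]$ and gives depth $2$. The norm bound then follows from Proposition~\ref{lem:linearcmb} and Proposition~\ref{lem:comp}.

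\emph{Step 1 (square on $[-1,1]$).} Since $\phi_k(t)=0$ for $t\le 0$, define $s_k(t):=\phi_k(t)+\phi_k(-t)$. For every $t\in[-1,1]$ exactly one summand is active, so Lemma~\ref{lem:approx_x2} applied to $|t|$ gives $|t^2-s_k(t)|\le \frac{1}{2k^2}$. It also gives $s_k(t)\in[0,1]$ and $s_k(0)=0$.

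For an input linear form $t=a^\top(x,y)$ with $a\in\{(\tfrac12,\tfrac12),(\tfrac12,0),(0,\tfrac12)\}$, I realize $\phi_k(\pm a^\top(x,y))$ as a depth-$1$ network. Its first-layer matrix has $k$ rows equal to $\pm a^\top$, and its biases and output weights are those of Lemma~\ref{lem:approx_x2}. Since $\|a\|_2^2\le \tfrac12\le 1$, the first-layer quantity $k\|a\|_2^2+\|b_0\|_2^2+1$ is at most the one computed in Lemma~\ref{lem:approx_x2}. Hence each of these six width-$k$ subnetworks lies in $\mathcal{N}\mathcal{N}(k,1,3)$.

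\emph{Step 2 (combination, error, and zeros).} Set
\[
\tilde\psi_k(x,y):=2\big(s_k(\tfrac{x+y}{2})-s_k(\tfrac x2)-s_k(\tfrac y2)\big).
\]
This is a linear combination of $6$ depth-$1$ networks with coefficients $\pm2$. By Proposition~\ref{lem:linearcmb} it lies in $\mathcal{N}\mathcal{N}\big(6k,1,\sqrt7\,\sqrt{6\cdot 6^2}\big)$, and $\sqrt{7\cdot216}\le 39$. All three arguments lie in $[-1,1]$, so the triangle inequality gives
\[
|xy-\tilde\psi_k(x,y)|\le 2\cdot 3\cdot\frac{1}{2k^2}=\frac{3}{k^2}.
\]
The exact zero property is structural. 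If $x=0$, then $\tilde\psi_k(0,y)=2\big(s_k(\tfrac y2)-s_k(0)-s_k(\tfrac y2)\big)=0$, because $s_k(0)=0$. The case $y=0$ is symmetric.

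\emph{Step 3 (clipping and norm).} Let $\psi_k:=\chi_1\circ\tilde\psi_k$, where $\chi_1(t)=\sigma(t)-\sigma(-t)-\sigma(t-1)+\sigma(-t-1)$. Since $xy\in[-1,1]$ and $\chi_1$ is the $1$-Lipschitz projection onto $[-1,1]$, clipping does not increase the error and keeps $\psi_k=0$ whenever $xy=0$. It also forces the range into $[-1,1]$.

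The network $\chi_1$ has first layer with entries $\pm1$ and biases $(0,0,-1,1)$, so its first-layer quantity is $\sqrt{4+2+1}=\sqrt7$. Its output weights $(1,-1,-1,1)$ have norm $2$. Thus $\chi_1\in\mathcal{N}\mathcal{N}(4,1,2\sqrt7)$. Proposition~\ref{lem:comp} then gives $\psi_k\in\mathcal{N}\mathcal{N}(\max\{6k,4\},2,K)$ with
\[
K\le\sqrt2\cdot2\sqrt7\cdot\sqrt{39^2+2}\approx 292\le 360.
\]

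The main obstacle is only this constant bookkeeping. The one point needing care is checking that the precomposed linear forms in Step 1 do not raise the first-layer norm beyond Lemma~\ref{lem:approx_x2}'s bound of $3$, which holds because $\|a\|_2\le1$.
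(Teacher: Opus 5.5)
Your proposal is correct and follows the paper's proof essentially step for step: the polarization identity, the even extension $\phi_k(t)+\phi_k(-t)$, Proposition~\ref{lem:linearcmb}, clipping with $\chi_1\in\mathcal{N}\mathcal{N}(4,1,2\sqrt7)$, and then Proposition~\ref{lem:comp}. The only difference is bookkeeping: you combine six width-$k$ pieces in $\mathcal{N}\mathcal{N}(k,1,3)$ directly, whereas the paper combines three width-$2k$ pieces in $\mathcal{N}\mathcal{N}(2k,1,6)$, so you get slightly better constants ($39$ and $\approx 292$ in place of the paper's $48$ and $\approx 359.4$); also, the last bias of $\chi_1$ should be $-1$ rather than $+1$, which does not change the norm.
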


\begin{proof}[Proof of Lemma~\ref{lem:approx_xy}]
Let $\phi_k\in  \mathcal{N}\mathcal{N}(k,1,3)$ be the one-hidden-layer network from Lemma~\ref{lem:approx_x2},
which satisfies $\phi_k(t)\in[0,1]$ for $t\in[0,1]$, $\phi_k(t)=0$ for $t \leq 0$ and
\[
|t^2-\phi_k(t)|\le \frac{1}{2k^2},\quad t\in[0,1].
\]
Define the even extension $\widetilde\phi_k(t):=\phi_k(t)+\phi_k(-t)$ for $t \in [-1,1]$ .
Then $\widetilde\phi_k\in  \mathcal{N}\mathcal{N}(2k,1,6)$, and for all
$t\in[-1,1]$ we have
\[
|t^2-\widetilde\phi_k(t)|
=
\left|t^2-\phi_k(|t|)\right|
\le \frac{1}{2k^2},
\quad
\widetilde\phi_k(t)\in[0,1].
\]

({Network structure}).
Based on the observation $xy = 2\big(((x+y)/2)^2 - (x/2)^2 - (y/2)^2\big),$   we construct three networks on $[-1,1]^2$,
\[
\Psi_1(x,y):=\widetilde\phi_k \left(\frac{x+y}{2}\right),\quad
\Psi_2(x,y):=\widetilde\phi_k \left(\frac{x}{2}\right),\quad
\Psi_3(x,y):=\widetilde\phi_k \left(\frac{y}{2}\right).
\]
Take $\Psi_1(x,y)$ as an example, if $\tilde{\phi}_k(x) = A_1 \sigma(A_0x+b_0)$, $A_0 \in \R^{2k \times 1}$, then \[\Psi_1(x,y) = A_1\sigma \left(A_0 \begin{pmatrix}
  \frac{1}{2}& \frac{1}{2}  
\end{pmatrix} 
\begin{pmatrix}
    x\\
    y
\end{pmatrix}+b_0\right), \] 
then the new $A_0':= A_0 \left(1/2 \; 1/2 \right)\in \R^{2k\times2}$
and $\Psi_1 \in  \mathcal{N}\mathcal{N}(2k, 1, K_{\Psi_1})$, where the norm is bounded by 
\[K_{\Psi_1} \leq \|A_1\|_F \sqrt{ \frac{1}{2}\|A_0\|_F^2+\|b_0\|_F^2+1}\le K_{\tilde{\phi}_k}\leq 6.\]
Similarly, we have $\Psi_2 \in  \mathcal{N}\mathcal{N}(2k,1,6)$ and $\Psi_3 \in  \mathcal{N}\mathcal{N}(2k,1,6)$.
Then, set
\begin{equation}\label{eq:psi-k}
\Psi(x,y)
:=
2\Big(\Psi_1(x,y)-\Psi_2(x,y)-\Psi_3(x,y)\Big).
\end{equation}
By Proposition~\ref{lem:linearcmb} (linear combination),  its norm constraint
is bounded by 
\[K_{\Psi} \le\sqrt{4}\sqrt{3\times (2\times6)^2}\le48 \]
and $\Psi \in  \mathcal{N}\mathcal{N}(6k,1,36)$. To make the output in the range of $[-1,1]$, we need an additional layer $\chi(x)$ given by
\[
\chi(x)
=
\sigma(x) - \sigma(-x)
- \sigma \left(x - 1\right)
+ \sigma \left(-x - 1\right)
= (x \vee (-1)) \wedge 1.
\]
Here, \(a \vee b := \max\{a,b\}\) and \(a \wedge b := \min\{a,b\}\). Its norm is bounded by
\[K_\chi = \sqrt{1+1+1+1}\sqrt{1+1+1+1+1 +1+1}=2\sqrt{7}.\]
That is, $\chi \in \CN\CN(4,1,2\sqrt{7})$.
Finally, define $\psi_k(x,y) = \chi \circ \Psi(x,y)$. By the compositional property (Proposition~\ref{lem:comp}), we have
\[K_{\psi_k} \leq \sqrt{2}\cdot 2\sqrt{7}\cdot \sqrt{48^2+2}\approx 359.4\leq 360\]

(Approximation error).
For $(x,y)\in[-1,1]^2$, letting $a=\frac{x+y}{2}$, $b=\frac{x}{2}$, $c=\frac{y}{2}$,
we have $a,b,c\in[-1,1]$ and
\begin{align*}
|xy-\psi_k(x,y)|
&\le
2\Big(
|a^2-\widetilde\phi_k(a)|
+
|b^2-\widetilde\phi_k(b)|
+
|c^2-\widetilde\phi_k(c)|
\Big)
\le
2\cdot 3\cdot \frac{1}{2k^2}
=
\frac{3}{k^2}.
\end{align*}

(Vanishing on the coordinate axes).
If $xy=0$, then either $x=0$ or $y=0$, and hence $a=\pm c$ or $a=\pm b$.
Since $\widetilde\phi_k$ is even, it follows from \eqref{eq:psi-k} that
$\psi_k(x,y)=0$ whenever $xy=0$.
This completes the proof.   
\end{proof}

After establishing the multiplication operator, the next key step is to extend it to $d$-degree monomials.
\begin{lemma}[Approximation of $d$-degree monomials]\label{lem:approx_monomials}
    For any $d \ge 2$ and $k \in \mathbb{N}$, there exists 
$\phi \in  \mathcal{N}\mathcal{N}(6dk, 2\lceil \log_2 d \rceil , K)$  with $K \le  722^{\lceil \log_2 d  \rceil} \cdot {2}^{\frac{7\lceil \log_2 d  \rceil^2-7\lceil \log_2 d  \rceil}{4}} $
such that $\phi : [-1,1]^d \to [-1,1]$, $\phi(x) = 0$ if any $x_i = 0$ and 
\[
|x_1 \cdots x_d - \phi(x)| \le \frac{6d}{k^2}, 
\quad x = (x_1,\ldots,x_d)^\top \in [-1,1]^d.
\] 
\end{lemma}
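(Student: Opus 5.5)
We will build the network as a binary tree of the multiplication networks from Lemma~\ref{lem:approx_xy}. First pad the input to $d' := 2^{m}$ coordinates with $m := \lceil \log_2 d\rceil$, setting the extra coordinates to the constant $1$. This is harmless: a constant coordinate can be produced through the bias, or equivalently one can pass $x_i$ through the identity $\sigma(t)-\sigma(-t)$ in the unpaired branches. We then use $m$ levels of pairwise products. At level $j$ ($j=1,\dots,m$) we apply $2^{m-j}$ copies of $\psi_k$ in parallel, each acting on a disjoint pair of outputs from level $j-1$. Every $\psi_k$ has depth $2$, so the total depth is $2m = 2\lceil\log_2 d\rceil$. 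The width at the first level is $2^{m-1}\cdot 6k \le 6dk$, and it decreases at later levels.

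\textbf{Error and zero property.} Lemma~\ref{lem:approx_xy} guarantees $\psi_k$ maps $[-1,1]^2$ into $[-1,1]$, so every intermediate value stays in the domain of the next multiplier. Let $e_j$ be the sup-error after level $j$. Since $|ab-\psi_k(a',b')| \le |ab-a'b'| + 3/k^2$ and $|ab-a'b'| \le |a-a'|+|b-b'|$ for values in $[-1,1]$, we get $e_j \le 2e_{j-1} + 3/k^2$ with $e_0=0$. Hence $e_m \le (2^m-1)\,3/k^2 \le 2d\cdot 3/k^2 = 6d/k^2$. For the vanishing property, note that $\psi_k(a,b)=0$ whenever $ab=0$. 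Induction up the tree then shows that a zero input $x_i=0$ propagates to a zero output at every level, so $\phi(x)=0$.

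\textbf{Norm bound.} By Proposition~\ref{lem:concate}, concatenating $N_j = 2^{m-j}$ copies of $\psi_k$ (each in $\mathcal{N}\mathcal{N}(6k,2,360)$) gives a level-$j$ block with norm $\sqrt{N_j+1}^{\,2}\sqrt{N_j}\cdot 360$. Composing the $m$ blocks with Proposition~\ref{lem:comp} multiplies the last block's norm by $\sqrt2^{\,2(m-1)}\prod_j\sqrt{K_j^2+2}$. That crude estimate gives factors like $2^{m-j}$, whose product is only $2^{O(m^2)}$, matching the stated form $722^{m}\,2^{(7m^2-7m)/4}$. The constant $722 \approx 2\cdot 360 + 2$ suggests the intended per-level factor comes from $\sqrt{360^2+2}$ together with concatenation factors.

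\textbf{Main obstacle.} The hard step is tracking the constants exactly so that the quadratic exponent comes out to precisely $7(m^2-m)/4$. A first attempt: rescale each block via Proposition~\ref{prop:rescaling} and bound the per-level hidden-layer factors $\sqrt{N_j+1}$ together with the interface factors $\sqrt2^{\,2}$. Summing the resulting exponents of $2$ over $j$ gives an arithmetic series in $j$, and I expect that series to be where $7m(m-1)/4$ arises. If the bookkeeping gives a slightly different but still $2^{O(m^2)}$ constant, the argument remains valid up to adjusting that constant.
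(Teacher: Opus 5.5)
Your construction is exactly the paper's proof: pad with ones to $2^m$ leaves, then apply $m$ levels of parallel copies of $\psi_k$. Your width and depth counts, the recursion $e_j\le 2e_{j-1}+3/k^2$, and the zero-propagation argument also match. The one piece you left open is the norm arithmetic. Your fallback ``up to adjusting that constant'' would not prove the lemma as stated, but you don't need it, because your own setup closes with room to spare.

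With $N_j=2^{m-j}\ge 1$, concatenation gives $K_j\le (N_j+1)\sqrt{N_j}\cdot 360\le 720\,N_j^{3/2}$. So the per-level factor is $2^{\frac32(m-j)}$, not $2^{m-j}$. Hence $\sqrt{K_j^2+2}\le K_j+2\le 722\,N_j^{3/2}$, and also $K_m\le 720\le 722$. Proposition~\ref{lem:comp} then yields
\[
K(\phi)\le \sqrt2^{\,2(m-1)}\,K_m\prod_{j=1}^{m-1}\sqrt{K_j^2+2}
\le 2^{m-1}\,722^{m}\prod_{j=1}^{m}2^{\frac32(m-j)}
=722^{m}\,2^{(m-1)+\frac34 m(m-1)}\le 722^{m}\,2^{\frac74 m(m-1)},
\]
where the last step uses $m(m-1)-(m-1)=(m-1)^2\ge 0$.

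So the arithmetic series you anticipated accounts only for the $\frac34 m(m-1)$ part of the exponent. The interface factors contribute just $2^{m-1}$, which is linear in $m$. The paper lands on exactly $\frac74$ because it bounds the interface factor more crudely, by $(\sqrt2)^{2m(m-1)}=2^{m(m-1)}$ instead of the $2^{m-1}$ that the composition property actually gives. You should therefore not expect to recover $7(m^2-m)/4$ as an identity. Your bookkeeping gives a slightly sharper bound that implies the stated one.
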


\begin{proof}[Proof of Lemma~\ref{lem:approx_monomials}]The proof of this lemma relies on the multiplication operator constructed inductively, 
as well as on the concatenation and combination properties of neural networks.
Let $s:=\lceil \log_2 d\rceil$ and set $m:=2^s$, so that $d\le m <2d$.
Extend $x=(x_1,\dots,x_d)\in[-1,1]^d$ to $\bar x\in[-1,1]^m$ by padding with ones:
\[
\bar x_i=x_i,\quad i=1,\dots,d,
\quad
\bar x_i=1,\quad i=d+1,\dots,m.
\]
Then $\prod_{i=1}^m \bar x_i=\prod_{i=1}^d x_i$.









\noindent{(Binary-tree network construction).}
Let $\psi_k\in  \mathcal{N}\mathcal{N}(6k,2,360)$ be the bivariate multiplier from Lemma~\ref{lem:approx_xy},
so that $\psi_k:[-1,1]^2\to[-1,1]$,
\[
|uv-\psi_k(u,v)|\le \frac{3}{k^2},\quad \forall (u,v)\in[-1,1]^2,
\quad
\psi_k(u,v)=0\ \text{whenever }uv=0.
\]
We construct the network as a binary tree employing $\psi_k$ inductively. For each tree level,  define vectors $z^{(\ell)}\in[-1,1]^{m/2^\ell}$ by
\[
z^{(0)}:=\bar x, 
\quad
z^{(\ell)}_j := \psi_k \left(z^{(\ell-1)}_{2j-1}, z^{(\ell-1)}_{2j}\right), \quad \ell=1,\ldots, s,
\quad j=1,\dots,\frac{m}{2^{\ell}}.
\]
And denote the composition of all the outputs of the $\ell$-th level as
\[\Phi_{\ell} = \left(z_1^{(\ell)}, \ldots, z_{m/2^\ell}^{(\ell)}\right)\]
Set the final output as $\phi(x):=z^{(s)}_1$. By construction, $\phi(x)=0$ whenever $x_i=0$ for some $i$,
since at the first level the corresponding pair-product is zero, and this property
propagates through all subsequent applications of $\psi_k$.

\begin{figure}[htbp]
\centering
\begin{tikzpicture}[
    scale=0.6,
    transform shape,
    leaf/.style={circle, draw=red!70, fill=red!12, thick, minimum size=7mm},
    internal/.style={circle, draw=blue!70, fill=blue!12, thick, minimum size=7mm},
    pad/.style={circle, draw=gray!70, fill=gray!20, thick, minimum size=7mm},
    root/.style={circle, draw=green!50!black, fill=green!12, thick, minimum size=8mm},
    arrow/.style={-Stealth, thick},
    every node/.style={font=\small}
]

\node[leaf] (x1) at (0,0) {$x_1$};
\node[leaf] (x2) at (1.4,0) {$x_2$};
\node[leaf] (x3) at (2.8,0) {$x_3$};
\node[leaf] (x4) at (4.2,0) {$x_4$};
\node[leaf] (x5) at (5.6,0) {$x_5$};
\node[pad] (x6) at (7.0,0) {$1$};
\node[pad] (x7) at (8.4,0) {$1$};
\node[pad] (x8) at (9.8,0) {$1$};

\node[below=2pt of x1, draw=none] {$z_1^{(0)}$};
\node[below=2pt of x2, draw=none] {$z_2^{(0)}$};
\node[below=2pt of x3, draw=none] {$z_3^{(0)}$};
\node[below=2pt of x4, draw=none] {$z_4^{(0)}$};
\node[below=2pt of x5, draw=none] {$z_5^{(0)}$};
\node[below=2pt of x6, draw=none] {$z_6^{(0)}$};
\node[below=2pt of x7, draw=none] {$z_7^{(0)}$};
\node[below=2pt of x8, draw=none] {$z_8^{(0)}$};

\node[internal] (n11) at (0.7,1.6) {$\psi_k$};
\node[internal] (n12) at (3.5,1.6) {$\psi_k$};
\node[internal] (n13) at (6.3,1.6) {$\psi_k$};
\node[internal] (n14) at (9.1,1.6) {$\psi_k$};

\node[above=2pt of n11, draw=none] {$z_1^{(1)}$};
\node[above=2pt of n12, draw=none] {$z_2^{(1)}$};
\node[above=2pt of n13, draw=none] {$z_3^{(1)}$};
\node[above=2pt of n14, draw=none] {$z_4^{(1)}$};

\node[internal] (n21) at (2.1,3.2) {$\psi_k$};
\node[internal] (n22) at (7.7,3.2) {$\psi_k$};

\node[above=2pt of n21, draw=none] {$z_1^{(2)}$};
\node[above=2pt of n22, draw=none] {$z_2^{(2)}$};

\node[root] (root) at (4.9,4.8) {$\psi_k$};
\node[above=2pt of root, draw=none] {$z_1^{(3)}=\phi(x)$};

\draw[arrow] (x1) -- (n11);
\draw[arrow] (x2) -- (n11);
\draw[arrow] (x3) -- (n12);
\draw[arrow] (x4) -- (n12);
\draw[arrow] (x5) -- (n13);
\draw[arrow] (x6) -- (n13);
\draw[arrow] (x7) -- (n14);
\draw[arrow] (x8) -- (n14);

\draw[arrow] (n11) -- (n21);
\draw[arrow] (n12) -- (n21);
\draw[arrow] (n13) -- (n22);
\draw[arrow] (n14) -- (n22);

\draw[arrow] (n21) -- (root);
\draw[arrow] (n22) -- (root);

\node[left] at (-0.8,0) {$\ell=0$};
\node[left] at (-0.8,1.6) {$\ell=1$};
\node[left] at (-0.8,3.2) {$\ell=2$};
\node[left] at (-0.8,4.8) {$\ell=3$};

\end{tikzpicture}
\caption{Binary-tree construction of the monomial approximator, illustrated for the case $m=8$ and $s=3$. The input is padded with ones so that the number of leaves is $m=2^s$. Each internal node applies the bivariate multiplier $\psi_k$ to a pair of children, producing the next-level variables $z_j^{(\ell)}$. After $s$ levels, the root outputs $\phi(x)=z_1^{(s)}$, which approximates $\prod_{i=1}^d x_i$.}
\label{fig:monomial_tree}
\end{figure}

Observe that the width doubles when two copies of $\psi_k$ are evaluated in parallel; hence
the widest width happens when $\ell=1$, that is, $W = 6k\cdot \frac{m}{2}\leq6kd$.
Moreover, the depth increases by $2$ for each level, so that
the depth will be $2 s = 2 \lceil \log_2 d \rceil$.
 By the concatenation Proposition~\ref{lem:concate},
these $m/2^\ell$  parallelization yield a depth-$2$ network $\Phi_\ell=(\psi_k,\ldots,\psi_k)$ whose
norm satisfies
\[
K(\Phi_\ell)\le \left(\sqrt{\frac{m}{2^\ell}+1}\right)^2\sqrt{\frac{m}{2^\ell} \cdot 360^2} \le720\left(\frac{m}{2^{\ell}}\right)^{3/2}.
\]
The full network $\phi=\Phi_{s}\circ\cdots\circ\Phi_1$ is obtained
by composing these $s$ levels and each $\Phi_i \in \CN\CN(6kd,2,720\left({m}/{2^{i}}\right)^{3/2})$. Using the composition proposition~\eqref{lem:comp}, we obtain
\[
\begin{aligned}
K(\phi)
&\le
(\sqrt{2})^{(s-1)2s}
\prod_{\ell=1}^{s}\left(K(\Phi_\ell)+2\right) \\
&\le
2^{s(s-1)}
\prod_{\ell=1}^{s}
722\left(2^{s-\ell}\right)^{3/2} =
722^s
2^{\frac{7}{4}s(s-1)}.
\end{aligned}
\]
Therefore, the generated neural network $\phi \in \CN\CN(6kd,2s,722^s
2^{\frac{7}{4}s(s-1)})$.

\noindent{(Error bound).}
Let $P^{(\ell)}_j := \prod_{i=1}^{2^\ell}\bar x_{(j-1)2^\ell+i}$ denote the exact block products,
and define the uniform error at level $\ell$ by
\[
e_\ell := \max_{1\le j\le m/2^\ell}  \left|z^{(\ell)}_j - P^{(\ell)}_j\right|.
\]
Clearly $e_0=0$. Since all quantities lie in $[-1,1]$, for each step we have
\begin{align*}
&\left|z^{(\ell+1)}_j - P^{(\ell+1)}_j\right|=
\left|\psi_k\big(z^{(\ell)}_{2j-1}, z^{(\ell)}_{2j}\big)
- P^{(\ell)}_{2j-1}P^{(\ell)}_{2j}\right| \\
&\le \left|\psi_k\big(z^{(\ell)}_{2j-1}, z^{(\ell)}_{2j}\big)
- z^{(\ell)}_{2j-1} z^{(\ell)}_{2j}\right|  + \left|z^{(\ell)}_{2j-1} z^{(\ell)}_{2j}
- P^{(\ell)}_{2j-1}P^{(\ell)}_{2j}\right| \\
&\le \frac{3}{k^2}
+ \left|z^{(\ell)}_{2j-1}-P^{(\ell)}_{2j-1}\right|
+ \left|z^{(\ell)}_{2j}-P^{(\ell)}_{2j}\right| \le \frac{3}{k^2} + 2e_\ell,
\end{align*}
where we used $|ab-a'b'|\le |a-a'|+|b-b'|$ for $a,b,a',b'\in[-1,1]$.
Taking the maximum over $j$ yields
\[
e_{\ell+1}\le 2e_\ell + \frac{3}{k^2}.
\]
Solving it with $e_0=0$ gives
\[
e_s \le \frac{3}{k^2}\sum_{t=0}^{s-1}2^t = \frac{3(2^s-1)}{k^2}
= \frac{3(m-1)}{k^2}
\le \frac{3m}{k^2}
\le \frac{6d}{k^2},
\]
since $m<2d$. Noting that $P^{(s)}_1=\prod_{i=1}^m\bar x_i=\prod_{i=1}^d x_i$, we obtain
\[
\left|x_1\cdots x_d-\phi(x)\right|
=|P^{(s)}_1-z^{(s)}_1|
\le e_s
\le \frac{6d}{k^2},
\quad x\in[-1,1]^d.
\]
This completes the proof.
\end{proof}

We are now ready to prove the approximation result for H\"older spaces.

\begin{proof}[Proof of \thmref{thm:apx_s_F}] The proof is divided into three steps.

\textbf{Step 1: Approximation based on local Taylor expansion $p(\bx)$. }
First, define \[\psi(t) = \sigma(1 - |t|)=\sigma\big(1 - \sigma(t) - \sigma(-t)\big) \in[0,1].\]
This function is a ``hat function" supported on $[-1,1]$ 
and can be realized by a small ReLU network with norm bounded by $1\cdot\sqrt{1+1+1+1} \sqrt{1+1+1}=2\sqrt{3}$. That is,
$\psi \in \mathcal{N}\mathcal{N}(2,2,2\sqrt{3})$.
For each multi-index $\mathbf n = (n_1,\ldots,n_d) \in \{0,1,\ldots,N\}^d$, we define the basis functions by scaling and shifting $\psi$ as
\[
\psi_{\mathbf n}(\bx) := \prod_{i=1}^d \psi(Nx_i - n_i), \quad \bx \in [0,1]^d.
\]
From this construction, for each fixed $n_i$, the factor $\psi(Nx_i-n_i)$ is nonzero only when 
\(
x_i \in \left[\frac{n_i-1}{N},\, \frac{n_i+1}{N}\right].
\)
Hence, each $\psi_{\mathbf n}$ has compact support localized around the grid point $\mathbf n/N$, and for any $\bx \in [0,1]^d$, at most $2^d$ such functions are active.
Moreover, these functions form a partition of unity:
\[
\sum_{\mathbf n} \psi_{\mathbf n}(\bx) = 1, \quad \forall \bx \in [0,1]^d,
\]
which follows from the corresponding one-dimensional property of $\psi$ and the tensor-product structure.
This locality leads to representations that depend only on nearby regions of the domain, which is crucial for stable approximation and error control. Figure~\ref{fig:partition} illustrates this construction. 
\begin{figure}[htbp]
    \centering
    \includegraphics[width=0.6\linewidth]{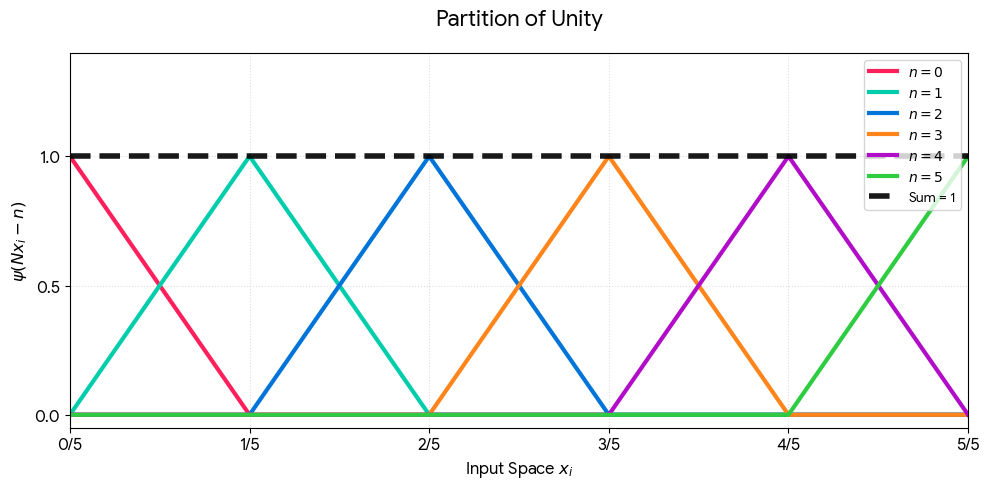}
    \caption{{Partition of Unity via Localized Hat Functions.} 
    The plot illustrates the basis functions $\psi(Nx_i - n_i)$ for $N=5$ and indices $n_i \in \{0, \dots, 5\}$. 
    Each colored triangle represents a localized basis function centered at $x_i = n_i/N$ with support on 
    $[\frac{n_i-1}{N}, \frac{n_i+1}{N}]$. The dashed black line indicates the sum $\sum_{n} \psi_n(x)$, 
    which is identically equal to $1$ across the domain $[0, 1]$.}
    \label{fig:partition}
\end{figure}

For any $h\in \CC^\alpha([0,1]^d)$, set its local Taylor expansion
\[
p(\mathbf x) = \sum_{\mathbf n}\sum_{\|\mathbf s\|_1 \le r} c_{{\mathbf n},{\mathbf s}}p_{\mathbf n, \mathbf s}(\mathbf x), \quad p_{\mathbf n, \mathbf s}(x) =  \psi_{\mathbf n}(\mathbf x) \left( \mathbf x - \frac{{\mathbf n}}{N} \right)^{\mathbf s},
\]
where $c_{\mathbf n, \mathbf s} = \dfrac{\partial^{\mathbf s}h \left(\tfrac{\mathbf n}{N}\right)}{\mathbf s!} \le \mathbf 1$ and $p_{\mathbf n, \mathbf s}$ is supported on $\{\mathbf x\in \R^d: \|\mathbf x- \frac{\mathbf n}{N}\|_\infty \leq \frac{1}{N} \}$. Then
\begin{align*}
|h(\mathbf x)-p(\mathbf x)|
&\le \sum_{\mathbf n} \psi_{\mathbf n}(\mathbf x)
\Biggl|
h(\mathbf x)-\sum_{\|\mathbf s\|_1\le r} c_{\mathbf n,\mathbf s}\left(\mathbf x-\frac{\mathbf n}{N}\right)^{\mathbf s}
\Biggr| \\[2pt]
&\le
\sum_{\mathbf n: \left\|\mathbf x-\frac{\mathbf n}{N}\right\|_\infty\le \frac{1}{N}}
\psi_{\mathbf n}(\mathbf x)
\left|
h(\mathbf x) -\sum_{\|\mathbf s\|_1\le r}
c_{\mathbf n,\mathbf s}
\left(\mathbf x-\frac{\mathbf n}{N}\right)^{\mathbf s}
\right|\\
&\le
\sum_{\mathbf n: \left\|\mathbf x-\frac{\mathbf n}{N}\right\|_\infty\le \frac{1}{N}}
d^{ r} 
\left\|\mathbf x-\frac{\mathbf n}{N}\right\|_\infty^{\alpha} \leq 2^d d^r N^{-\alpha},
\end{align*}
where we used the Taylor remainder estimate with 
$\boldsymbol{\xi}$ lying on the line segment between 
$\mathbf x$ and $\frac{\mathbf n}{N}$g
\begin{align*}
\left| h(\mathbf{x}) - \sum_{\|\mathbf{s}\|_1 \le r} c_{\mathbf{n},\mathbf{s}} \left(\mathbf{x} - \frac{\mathbf{n}}{N}\right)^{\mathbf{s}} \right|
&= \left| \sum_{\|\mathbf{s}\|_1 = r} \frac{1}{\mathbf{s}!} \left( \partial^{\mathbf{s}} h(\boldsymbol{\xi}) - \partial^{\mathbf{s}} h\!\left(\frac{\mathbf{n}}{N}\right) \right) \left(\mathbf{x} - \frac{\mathbf{n}}{N}\right)^{\mathbf{s}} \right| \\
&\le \sum_{\|\mathbf{s}\|_1 = r} \frac{1}{\mathbf{s}!} \left| \partial^{\mathbf{s}} h(\boldsymbol{\xi}) - \partial^{\mathbf{s}} h\!\left(\frac{\mathbf{n}}{N}\right) \right| \cdot \left| \left(\mathbf{x} - \frac{\mathbf{n}}{N}\right)^{\mathbf{s}} \right| \\
&\le \sum_{\|\mathbf{s}\|_1 = r} \frac{1}{\mathbf{s}!} \left\| \boldsymbol{\xi} - \frac{\mathbf{n}}{N} \right\|_\infty^{\alpha - r}
\cdot \left\| \mathbf{x} - \frac{\mathbf{n}}{N} \right\|_\infty^{r} \\
&\le \left( \sum_{\|\mathbf{s}\|_1 = r} \frac{1}{\mathbf{s}!} \right)
\left\| \mathbf{x} - \frac{\mathbf{n}}{N} \right\|_\infty^{\alpha} \le d^r \left\| \mathbf{x} - \frac{\mathbf{n}}{N} \right\|_\infty^{\alpha}
\end{align*}
and at most $2^d$ functions $\psi_{\mathbf n}$ overlap at any point $\bx$  for the last step. 

\textbf{Step 2: Approximate $p(\bx)$ by neural networks.}
 Note that there are 
$ d + \|\mathbf s\|_1 \le t$ multiplier factors in $p_{\mathbf n, \mathbf s}(\bx)$, where $t=d+r$.
By Lemma~\ref{lem:approx_monomials}, for any $k \in \mathbb{N}$, 
there exists a network $\phi_t \in \mathcal{N}\mathcal{N}(6tk, 2\lceil \log_2 t \rceil, K_t)$ 
that approximates the $t$-fold product on $[-1,1]^t$ with error
\begin{equation}\label{equ:t_fold}
  \left| \prod_{i=1}^t x_i - \phi_t(x_1,\ldots,x_t) \right| \le \frac{6t}{k^2},  
\end{equation}
and its norm $K_t$ satisfies
\begin{equation} \label{equ:D_monomial}
   K_t \le  722^{\lceil \log_2 t  \rceil} \cdot {2}^{\frac{7\lceil \log_2 t  \rceil^2-7\lceil \log_2 t  \rceil}{4}}.  
\end{equation}
Next, we construct a network for each $p_{\mathbf n, \mathbf{s}}$ and then do the linear combination of them. 
Slightly abusing the notation, we observe that
\[
\psi(Nx_i-n_i)=
\sigma(1-\sigma(Nx_i-n_i)-\sigma(-Nx_i+n_i))
\]
and $1\cdot\sqrt{1+1+1+1}\cdot \sqrt{2N^2+2n_i^2+1}\leq 2\sqrt{5}N\leq 2\sqrt{15}N$, therefore \\$\psi(Nx_i-n_i) \in \mathcal N \mathcal N(2,2,2\sqrt{15}N)$. 

In addition, it is straightforward to observe that
\begin{align*}
\left (x_i-\frac{n_i}{N} \right)
&= \sigma\left (
x_i-\frac{n_i}{N}
 \right)-
 \sigma\left (
-x_i+\frac{n_i}{N}
 \right)
 \\
 &=
 \sigma\left (
\sigma\left (
x_i-\frac{n_i}{N}
 \right) \right ) -
 \sigma \left (\sigma\left (
-x_i+\frac{n_i}{N}
 \right)
 \right)
\end{align*}
and thus $2\sqrt{1+1+1}\cdot \sqrt{1+1+2\frac{n_i^2}{N^2}+1}\leq 2\sqrt{15}\leq 2\sqrt{15}N$. This implies
$ \left (x_i-\frac{n_i}{N} \right) \in \mathcal N \mathcal N(2,2,2\sqrt{3}\sqrt{5})$. By  Proposition \ref{lem:concate}, it follows that
\[
\left (\psi(Nx_1-n_1), \ldots, \psi(Nx_d-n_d), \ldots, x_i-\frac{n_i}{N},\ldots \right )
\in \mathcal N \mathcal N
(2(d+r),2,2\sqrt{15}(d+r)\sqrt{d+r}N).
\]
Then, by \eqref{equ:t_fold} and by Proposition \ref{lem:comp} we obtain a network
\[\phi_{\bn,\bs}=\phi_t(\psi(Nx_1-n_1), \ldots, \psi(Nx_d-n_d), \ldots, x_i-n_i,\ldots )\] 
such that $W_{\phi_{\bn,\bs}}=12(d+r)k, D_{\phi_{\bn,\bs}}=2\lceil \log_2(d+r)\rceil+2$  and
\[
\left|\phi_{\bn,\bs}(\bx)-p_{\bn,\bs}(\bx) \right|\leq \frac{6(d+r)}{k^2}.
\]
For its norm bound, since 
$2\sqrt{15}(d+r+1)\sqrt{d+r}N)K_{\phi_t}\leq 8(d+r)^2N K_{\phi_t}$, we have
\[
K_{\phi_{\bn,\bs}}\leq 
8(d+r)^2N  \cdot 722^{\lceil \log_2 (2(d+r))  \rceil} \cdot {2}^{\frac{7\lceil \log_2 (2(d+r))  \rceil^2-7\lceil \log_2 (2(d+r))  \rceil}{4}}.
\]

Moreover, since $\phi_{\bn,\bs}(x_1, \ldots,x_t)=0$ when $x_1 \cdot x_2 \cdots x_t=0$,  $\phi_{\bn,\bs}$ is supported on \\$\{\mathbf x\in \R^d: \|\mathbf x- \frac{\mathbf n}{N}\|_\infty \leq \frac{1}{N} \}$. 

Define the following linear combination accordingly to approximate $p(\bx)$
\[\phi(\bx) = \sum_{\bn}\sum_{\|\bs\|_1\leq r}c_{\bn,\bs}\phi_{\bn,\bs}(\bx).\]
There are less than $(N+1)^d d^r$ terms in the summation.
By the linear combination (Proposition~\ref{lem:linearcmb}) and \eqref{equ:D_monomial}, we have
\[\phi(\bx)\in  \mathcal{N}\mathcal{N}(12(N+1)^d d^r(d+r)k, 2\lceil \log_2(d+r)\rceil+2 ,K_\phi)\] 
and the norm constraint $K_\phi$ can be bounded, said $D_\phi= 2\lceil \log_2(d+r)\rceil+2 $, by
\[K_\phi\le 
\left((N+1)^dd^r+1\right)^{\frac{D_\phi+1}{2}}8(d+r)^2N  \cdot 
722^{\lceil \log_2 (2(d+r))  \rceil} \cdot {2}^{\frac{7\lceil \log_2 (2(d+r))  \rceil^2-7\lceil \log_2 (2(d+r))  \rceil}{4}}. \]
The approximation error
\begin{align*}
|p(\mathbf{x}) - \phi(\mathbf{x})\vert{} 
&= \left\vert{} \sum_{\mathbf{n}} \sum_{\Vert{}\mathbf{s}\Vert{}_1 \leq r} c_{\mathbf{n}, \mathbf{s}} p_{\mathbf{n}, \mathbf{s}}(\mathbf{x}) - \sum_{\mathbf{n}} \sum_{\Vert{}\mathbf{s}\Vert{}_1 \leq r} c_{\mathbf{n}, \mathbf{s}} \phi_{\mathbf{n}, \mathbf{s}}(\mathbf{x}) \right\vert{} \\
&\leq \sum_{\mathbf n: \left\|\mathbf x-\frac{\mathbf n}{N}\right\|_\infty\le \frac{1}{N}}\sum_{\Vert{}\mathbf{s}\Vert{}_1 \leq r} \vert{}c_{\mathbf{n}, \mathbf{s}}\vert{} \cdot \left\vert{} p_{\mathbf{n}, \mathbf{s}}(\mathbf{x}) - \phi_{\mathbf{n}, \mathbf{s}}(\mathbf{x}) \right\vert{} \\ 
&\leq \sum_{\mathbf n: \left\|\mathbf x-\frac{\mathbf n}{N}\right\|_\infty\le \frac{1}{N}}\sum_{\Vert{}\mathbf{s}\Vert{}_1 \leq r} 1 \cdot \frac{6(d+r)}{k^2} \\ 
&=  2^d d^r \cdot \frac{6(d+r)}{k^2} 
\end{align*}
where we used that $2^d$ functions $\psi_{\mathbf n}$ overlap at any point $\bx$  for the last step

\textbf{Step 3: Combining error bounds.}
The overall approximation error is bounded by
\begin{align}\label{equ:apx_s}
&\|h-\phi\|_{L^\infty([0,1]^d)} \notag \\
& \le\|h-p\|_{L^\infty([0,1]^d)} + \|p-\phi\|_{L^\infty([0,1]^d)} \notag\\
&\le 2^{d} d^{r} N^{-\alpha}
+
\frac{6 \cdot 2^{d} (d+r)d^{r}}{k^{2}}.
\end{align}
It gives a rate of order $k^{-2}$ by choosing $N \asymp \lceil k^{2/\alpha} \rceil$. Then the associated network has width $W \asymp k^{2d/\alpha+1} $, depth $D=2\lceil \log_2(d+r)\rceil+2$ and norm constraint $K\asymp  N^{d(D+1)/2+1} \asymp k^{(d(D+1)+2)/\alpha}$. Since the approximation rate in \eqref{equ:apx_s} decreases with increasing $N$ and $k$ (increasing width $W$) for a fixed depth, it follows that
\[
\|h - \phi\|_{L^\infty([0,1]^d)} 
\lesssim K^{-\frac{2\alpha}{2+d(D+1)}}
\]
provided that $W \gtrsim K^{\frac{2d+\alpha}{d(D+1)+2}}$. 
This completes the proof.
\end{proof}

\subsection{Proof of approximating sparse compositional functions}\label{subsec:proof_apx_cmp}
Before presenting the proof of Theorem~\ref{thm:apx_cmp}, we show that the result extends to intermediate functions of the form 
\[
g_{v^{(\ell)}} \in C^{\alpha_{v^{(\ell)}}}
\Big([-R_{v^{(\ell-1)}},R_{v^{(\ell-1)}}]^{d_{\text{in}(v^{(\ell)})}}\Big),
\quad 
\|g_{v^{(\ell)}}\|_\infty \le R_{v^{(\ell)}},
\]
since they can be reduced to the unit-cube setting via a simple rescaling.
For $\ell = 1, \ldots, L-1$, define
\[
h_{\mathcal{V}^{(\ell)}}(x)
:=
\frac{ g_{\mathcal{V}^{(\ell)}}\!\big( 2R_{\mathcal{V}^{(\ell-1)}} x - R_{\mathcal{V}^{(\ell-1)}} \big) }{2R_{\mathcal{V}^{(\ell)}}}
+ \frac{1}{2},
\]
where the affine transformation $2R_{\mathcal{V}^{(\ell-1)}} x - R_{\mathcal{V}^{(\ell-1)}}$ is applied coordinate-wise, i.e., each parent variable $x_u$ is mapped to $2R_u x_u - R_u$.
At the boundary levels, define
\[
h_{\mathcal{V}^{(0)}} := \frac{g_{\mathcal{V}^{(0)}}}{2R_0} + \frac{1}{2},
\qquad
h_{\mathcal{V}^{(L)}}(x) := g_{\mathcal{V}^{(L)}}\!\big( 2R_{\mathcal{V}^{(L-1)}} x - R_{\mathcal{V}^{(L-1)}} \big).
\]

Then, for $\ell = 1, \ldots, L-1$, we have
\[
h_{v^{(\ell)}} \in C^{\alpha_{v^{(\ell)}}}\big([0,1]^{d_{\text{in}(v^{(\ell)})}}, 1\big), \quad
\text{and} \quad
h_{v^{(L)}} \in C^{\alpha_{v^{(L)}}}\big([0,1]^{|\mathrm{pa}(v^{(L)})|}, R_{v^{(L)}}\big).
\]
Consequently, the composition $f$ can be equivalently written as
\[
f = g_{\mathcal{V}^{(L)}} \circ \cdots \circ g_{\mathcal{V}^{(0)}}
  = h_{\mathcal{V}^{(L)}} \circ \cdots \circ h_{\mathcal{V}^{(0)}}.
\]

\begin{proof}[Proof of Theorem~\ref{thm:apx_cmp}]
 The proof contains three parts.

\textbf{(Networks construction).} 
By \thmref{thm:apx_s_F}, each $g_{v^{(\ell)}}$ can be approximated by a neural network $\phi_{v^{(\ell)}} \in \CN\CN(W_{v^{(\ell)}},{D}_{v^{(\ell)}},K_{v^{(\ell)}})$ with

\begin{align}
    W_{v^{(\ell)}} &= 12({d_{\text{in}(v^{(\ell)})}}+{r_{v^{(\ell)}}}){d_{\text{in}(v^{(\ell)})}}^{r_{v^{(\ell)}}} (N_{v^{(\ell)}}+1)^{d_{\text{in}(v^{(\ell)})}} k_{v^{(\ell)}},\notag\\
   D_{v^{(\ell)}}  &= 2\left\lceil \log_2({d_{\text{in}(v^{(\ell)})}}+{r_{v^{(\ell)}}}) \right\rceil+2,\label{equ:block_depth}\\
   K_{v^{(\ell)}} &= 
\left((N_{v^{(\ell)}}+1)^{d_{\text{in}(v^{(\ell)})}}
d_{\text{in}(v^{(\ell)})}^{r_{v^{(\ell)}}}+1\right)^{\frac{D_{v^{(\ell)}}+1}{2}}8(d_{\text{in}(v^{(\ell)})}+r_{v^{(\ell)}})^{2} N_{v^{(\ell)}}\notag\\ 
&\cdot 
722^{\lceil \log_2 (2(d_{\text{in}(v^{(\ell)})}+r_{v^{(\ell)}}))  \rceil} \cdot {2}^{\frac{7\lceil \log_2 (2(d_{\text{in}(v^{(\ell)})}+r_{v^{(\ell)}}))  \rceil^2-7\lceil \log_2 (2(d_{\text{in}(v^{(\ell)})}+r_{v^{(\ell)}}))  \rceil}{4}}.\notag
\end{align}

and 
\begin{equation}\label{equ:apx_vl}
\begin{split}
    &\|g_{v^{(\ell)}} - \phi_{v^{(\ell)}}\|_{L^\infty([0,1]^{{d_{\text{in}(v^{(\ell)})}}})} \\
&\le 2^{d_{\text{in}(v^{(\ell)})}} {d_{\text{in}(v^{(\ell)})}}^{r_{v^{(\ell)}}} \left( N_{v^{(\ell)}}^{-\alpha_{v^{(\ell)}}} + \frac{6(d_{\text{in}(v^{(\ell)})}+r_{v^{(\ell)}}){d_{\text{in}(v^{(\ell)})}}^{r_{v^{(\ell)}}}}{k_{v^{(\ell)}}^2} \right)  
\end{split}  
\end{equation}
where the precise constants are given by the proof of \thmref{thm:apx_s_F}.
By the concatenation property (Proposition~\ref{lem:concate}) of neural networks, define
\[
\phi_{\CV^{(\ell)}} := \left( \phi_{v_1^{(\ell)}}, \ldots, \phi_{v^{(\ell)}_{|\CV^{(\ell)}|}}\right) \in \CN\CN\left(W_{\CV^{(\ell)}}, D_{\CV^{(\ell)}}, K_{\CV^{(\ell)}}\right).
\]
Here, the width $W_{\CV^{(\ell)}}= \sum_{v^{(\ell)}\in \CV^{(\ell)} } W_{v^{(\ell)}}$, the depth $D_{\CV^{(\ell)}} =\max_{v^{(\ell)}\in \CV^{(\ell)} }\left\{D_{v^{(\ell)}}\right\}$, and the norm bound $K_{\CV^{(\ell)}}= (|\CV^{(\ell)}|+1)^{(D_{\CV^{(\ell)}}+1)/2} \max_{v^{(\ell)}\in \CV^{(\ell)}} K_{v^{(\ell)}}$.
We then construct the overall network
\[
\phi = \phi_{\CV^{(L)}} \circ \cdots \circ \phi_{\CV^{(1)}},
\]
whose width is $W =\max_\ell\left\{W_{\CV^{(\ell)}}\right\}$, depth is $D=\sum_{\ell=1}^L D_{\CV^{(\ell)}}$, and norm bound satisfies $K\leq2^{D/2}\prod_{\ell} (K_{\CV^{(\ell)}}+2)$ by Proposition~\ref{lem:elementary}.
That is, the resulting network satisfies
\[
\phi \in \CN\CN\left(  \max_\ell\left\{W_{\CV^{(\ell)}}\right\}, \sum_{\ell=1}^L D_{\CV^{(\ell)}},2^{D/2} \prod_{\ell} (K_{\CV^{(\ell)}}+2) \right).
\]

\textbf{(Approximation rate).}
Denote $G_\ell = g_{\CV^{(\ell)}} \circ g_{\CV^{(\ell-1)}} \circ \cdots g_{\CV^{(0)}} $ and $\Phi_\ell = \phi_{\CV_{(\ell)}}\circ \phi_{\CV_{(\ell-1)}} \circ \cdots \circ \phi_{\CV_{(0)}}$.
Then
\begin{align*}
   & |f(\bx)-\phi(\bx)|=|G_L(\bx)-\Phi_L(\bx)|\\
    &= |g_{\CV^{(L)}} \circ \cdots \circ g_{\CV^{(0)}}(\bx)
   - \phi_{\CV^{(L)}} \circ \cdots \circ \phi_{\CV^{(0)}}(\bx)|\\
   & = |g_{\CV^{(L)}} \circ G_{L-1}(\bx)-g_{\CV^{(L)}}\circ \Phi_{L-1}(\bx)|+ |g_{\CV^{(L)}}\circ \Phi_{L-1}(\bx)-\phi_{\CV^{(L)}}\circ \Phi_{L-1}(\bx)|  \\
   &=
   | g_{\CV^{(L)}} \circ G_{L-1}(\bx)-g_{\CV^{(L)}}\circ \Phi_{L-1}(\bx)|+ |g_{\CV^{(L)}}\circ \Phi_{L-1}(\bx)-\phi_{\CV^{(L)}}\circ \Phi_{L-1}(\bx)|
   \\
   &\leq  \left|G_{L-1}(\bx)-\Phi_{L-1} (\bx)\right|_\infty^{\alpha_{v^{(L)}} \wedge 1}+ \|g_{\CV^{(L)}}-\phi_{\CV_{(L)}}\|_{L_\infty([0,1]^{\mathrm{pa}(\CV^{(L)})})}.
\end{align*}
Here, $|G(\bx)|_\infty = \max_{i} (|G(\bx)|)_i$ for any vector-valued function $G(\bx)$. 
In the last step, we use the fact that 
$|f(x_1)-f(x_2)|\le \|f\|_{C^\alpha}\|x_1-x_2\|_\infty^\alpha$ for $0<\alpha\le 1$, and 
$|f(x_1)-f(x_2)|\le \|\nabla f\|_\infty\|x_1-x_2\|_\infty$ for $\alpha>1$, applied to the first term.

For any $\ell = 1, \ldots, L-1$, we derive the following error iteration
\begin{align}\label{equ:error_iter}
&\left|G_{\ell}(\bx)-\Phi_{\ell} (\bx)\right|_\infty \notag\\
&= \max_{v^{(\ell)}\in \mathrm{pa}(v^{(\ell+1)})}
\Bigg\{
\left|g_{v^{(\ell)}}\circ G_{\mathrm{pa}(v^{(\ell)})}(\bx)
      -g_{v^{(\ell)}}\circ \Phi_{\mathrm{pa}(v^{(\ell)})}(\bx)\right| \notag \\
&\qquad\qquad\qquad\qquad
+ \left| g_{v^{(\ell)}}\circ \Phi_{\mathrm{pa}(v^{(\ell)})}(\bx)
      -\phi_{v^{(\ell)}}\circ \Phi_{\mathrm{pa}(v^{(\ell)})}(\bx)\right|
\Bigg\} \notag\\
&\le \max_{v^{(\ell)}\in \mathrm{pa}(v^{(\ell+1)})}
\Bigg\{ 
\left| G_{{\ell-1}}(\bx)-\Phi_{{\ell-1}}(\bx)\right|_\infty^{\alpha_{v^{(\ell)}} \wedge 1} 
+ \left\|g_{v^{(\ell)}}-\phi_{v^{(\ell)}} \right\|_{L_\infty([0,1]^{{d_{\text{in}(v^{(\ell)})}}})}
\Bigg\}.
\end{align}

Recalling ${\mathrm{pa}(v^{(\ell)})}=\{v^{(\ell-1)}:(v^{(\ell-1)},v^{(\ell)})\in \CE^{(\ell)}\}$, 
we write $G_{\mathrm{pa}(v^{(\ell)})}$ and $\Phi_{\mathrm{pa}(v^{(\ell)})}$ instead of $G_{\ell-1}$ and $\Phi_{\ell-1}$ to emphasize the dependency along the path.
To interpret $\max_{v^{(\ell)}\in \mathrm{pa}(v^{(\ell+1)})}$ appearing above, it is necessary to specify which node $v^{(\ell+1)} \in \CV^{(\ell+1)}$ is selected. This choice is determined by the error propagation from the output layer backward. Since there is a single output node at level $L$, the critical path is identified by selecting, at each level, the node that maximizes the propagated error. 
For instance, we define 
\(
v^{(L-1)}_* := \arg\max_{i} \big( |G_{L-1}(\bx)-\Phi_{L-1}(\bx)| \big)_i,
\)
and subsequently select $v^{(L-2)} \in \mathrm{pa}(v^{(L-1)}_*)$, and so on. In this sense, the specific node $v^{(\ell+1)}$ is implicitly determined by the error values established in the preceding levels of the recursion and $\max_{v^{(\ell)}\in \mathrm{pa}(v^{(\ell+1)})}$ is actually $\max_{v^{(\ell)}\in \mathrm{pa}(v^{(\ell+1)})} \cdots \max_{v^{(L-1)}\in \mathrm{pa}(v^{(L)})} (\cdot)$. To simplify notation, let $\pi_{\ell \to L} := v^{(\ell)} \to v^{(\ell+1)}\to \ldots \to v^{(L)}$ 
denote the path selected by the above recursion.
We then write
\[
\max_{v^{(\ell)}\in \mathrm{pa}(v^{(\ell+1)})} \cdots \max_{v^{(L-1)}\in \mathrm{pa}(v^{(L)})} (\cdot)
=
\max_{ \pi_{\ell \to L}} (\cdot).
\]

Denote
\[
A_\ell = |G_\ell(\bx)-\Phi_\ell(\bx)|_\infty, \qquad 
B_\ell = \|g_{v^{(\ell)}}-\phi_{v^{(\ell)}}\|_{L_\infty([0,1]^{d_{\text{in}(v^{(\ell)})}})}, \qquad
\tilde{\alpha}_\ell = \min\{ \alpha_{v^{(\ell)}} , 1\}.
\]
Then, from \eqref{equ:error_iter},
\[
A_L= |G_L(\bx)-\Phi_L(\bx)|, \qquad
A_\ell \le \max_{\pi_{\ell \to L}} \sqrt{d}A_{\ell-1}^{\tilde{\alpha}_{\ell}}+B_\ell,
\qquad A_0=0.
\]

Using $(x+y)^\alpha\le x^\alpha+y^\alpha$ for $\alpha\in(0,1]$ and iterating the recursion, we obtain
\begin{align}\label{equ:path}
 A_L
&\le
\max_{\pi_{1\to L}}
\left\{ 
B_1^{\tilde{\alpha}_2\tilde{\alpha}_3\cdots \tilde{\alpha}_L}
+
d^{(L-2)/2}B_2^{\tilde{\alpha}_3\cdots \tilde{\alpha}_L}
+\cdots+
\sqrt{d}B_{L-1}^{\tilde{\alpha}_L}
+
B_L
\right\}  \\
&\le L  \max_{v^{(\ell)}: \pi_{1 \to L}} 
B_\ell^{\prod_{j=\ell+1}^L \tilde{\alpha}_j}\notag. 
\end{align}
To clarify, the notation $\max_{v^{(\ell)}: \pi_{1 \to L}} $ refers to selecting the node $v^{(\ell)}$ along the path $\pi_{1 \to L}$ that maximizes the quantity $B_\ell^{\prod_{j=\ell+1}^L \tilde{\alpha}_j}$. The exponent $\prod_{j=\ell+1}^L \tilde{\alpha}_j$ represents the cumulative regularity along the subpath $v^{(\ell)} \to \cdots \to v^{(L)}$, which is part of the full path $v^{(1)} \to \cdots \to v^{(L)}$.
In other words, error propagation follows a forward evaluation, while the critical path is identified by backward tracing of the maximal error from the output node.
Therefore,
\begin{equation}\label{equ:final_dif}
|f(x)-\phi(x)|
\leq L   \max_{v^{(\ell)}: \pi_{1 \to L}} \|g_{v^{(\ell)}}-\phi_{v^{(\ell)}}\|_{L_\infty([0,1]^{{d_{\text{in}(v^{(\ell)})}}})}^{\prod_{j=\ell+1}^L \tilde{\alpha}_j}.
\end{equation}

\textbf{(Parameter tuning).}
By \eqref{equ:block_depth} and \eqref{equ:apx_vl}, taking 
\(
N_{v^{(\ell)}}\asymp  \left\lceil k_{v^{(\ell)}}^{2/\alpha_{v^{(\ell)}}}\right\rceil 
\)
gives the width for each node $W_{v^{(\ell)}}\asymp k_{v^{(\ell)}}^{(2 d_{\text{in}(v^{(\ell)})}+\alpha_{v^{(\ell)}})/{\alpha_{v^{(\ell)}}}}$ and the corresponding norm constraint $K_{v^{(\ell)}} \asymp k_{v^{(\ell)}}^{\left(d_{\text{in}({v^{(\ell)}})}(D_{v^{(\ell)}}+1)+2\right)/{\alpha_{v^{(\ell)}}}}$.
Then, the parameters for the overall network $\phi$ become
\[
W=\max_\ell \left\{\sum_{v^{(\ell)}}W_{v^{(\ell)}}\right\}\asymp \max_{\ell,v^{(\ell)}} k_{v^{(\ell)}}^{\frac{2 d_{\text{in}(v^{(\ell)})}+\alpha_{v^{(\ell)}}}{\alpha_{v^{(\ell)}}}}\asymp\max_{\ell,v^{(\ell)}} K_{v^{(\ell)}}^{\frac{2d_{\text{in}(v^{(\ell)})}+\alpha_{v^{(\ell)}}}{2+d_{\text{in}(v^{(\ell)})}\left(D_{v^{(\ell)}} +1\right)}}
\]
\[
D=\sum_{\ell=1}^L \max_{v^{(\ell)}\in \CV^{(\ell)}}D_{v^{(\ell)}}=2L\max_{\ell, v^{(\ell)}}\left\lceil \log_2({d_{\text{in}(v^{(\ell)})}}+{r_{v^{(\ell)}}}) \right\rceil+2L
\]
\[
K = 2^{D/2}\prod_{\ell=1}^L \left( \left(|\CV^{(\ell)}|+1\right)^{\frac{D_{\CV^{(\ell)}}+1}{2}}\max_{v^{(\ell)}\in\CV^{(\ell)}}K_{v^{(\ell)}} +2\right) \asymp \left(\max_{\ell,v^{(\ell)}}K_{v^{(\ell)}}\right)^L.
\]

Therefore,  for any $W \gtrsim \max_{\ell,v^{(\ell)}} K^{\frac{2 d_{\text{in}(v^{(\ell)})}+\alpha_{v^{(\ell)}}}{d_{\text{in}({v^{(\ell)}})}\left(D_{v^{(\ell)}}+1\right)L+2L}}$,
combining \eqref{equ:apx_vl} and \eqref{equ:final_dif}, we obtain
\begin{align*}
\|f - \phi\|_{\infty}
& \le L  
\max_{v^{(\ell)}: \pi_{1 \to L}} 
K_{v^{(\ell)}}^{-\tfrac{2\alpha_{v^{(\ell)}}}{2+d_{\text{in}(v^{(\ell)})}\left(D_{v^{(\ell)}}+1\right)}
      \prod_{m=\ell+1}^{L} \tilde\alpha_{v^{(m)}}} \\
& \le L   
\max_{v^{(\ell)}:\pi_{1 \to L}}
K^{-\frac{2\alpha_{v^{(\ell)}}^*}{2L+(D+L)d_{\text{in}({v^{(\ell)}})}}} 
\end{align*}
where 
$\alpha_{v^{(\ell)}}^*
:= 
\alpha_{v^{(\ell)}} 
\prod_{m=\ell+1}^{L} 
   \left\{\alpha_{v^{(m)}} , 1\right\}.$
This completes the proof.   
\end{proof}




	
	


\section{Proof of the statistical analysis of Frobenius norm-constrained neural networks}\label{sec:pf_stats}
In this section, we combine the approximation error established in Theorem~\ref{thm:apx_cmp} with estimation error bounds to derive the excess risk.

As discussed in Section~\ref{sec:statistical}, the empirical Rademacher complexity is a standard tool for controlling the estimation error.
Let $\mathcal{G}$ be a class of real-valued functions defined on $\mathcal{X}$. The {empirical Rademacher complexity} of $\mathcal{G}$ with respect to a sample \( S = \{x_1, \dots, x_n\} \) is defined as
\[
\hat{\mathfrak{R}}_n(\mathcal{G}) := \mathbb{E}_{\sigma} \left[ \sup_{g \in \mathcal{G}} \frac{1}{n} \sum_{i=1}^n \sigma_i g(x_i) \right],
\]
where \( \sigma_1, \dots, \sigma_n \) are independent Rademacher variables, i.e., random variables uniformly distributed over \( \{-1, +1\} \).
Prior work \citep{neyshabur2015norm,bartlett2017spectrally,golowich2020size} provide Rademacher complexity bounds for function classes generated by neural networks of the form
\begin{equation}\label{equ:nn_absorbed}
   \phi(x)= \widetilde{A}_D \sigma(\widetilde{A}_{D-1} \sigma(\cdots \sigma(\widetilde{A}_0 \tilde{x}))). 
\end{equation}
Here, the bias is absorbed into the augmented matrices $\widetilde{A}_\ell$, where
\[
\widetilde{x} =\begin{pmatrix}
x  \\
1 
\end{pmatrix}, \quad \widetilde{A}_D = (A_D,  0),
\quad
\widetilde{A}_\ell =
\begin{pmatrix}
A_\ell & b_\ell \\
\mathbf{0} & 1
\end{pmatrix},
\quad \ell = 0,\ldots,D-1,
\]
with $\widetilde{A}_D \in \R^{N_{D+1} \times (N_D+1)}$ and $\widetilde{A}_\ell \in \R^{(N_{\ell+1}+1)\times(N_{\ell}+1)}$.
Analogously to the function class $\CN\CN(W,D)$ including \eqref{equ:relu_network}, we denote this function class by $\widetilde{\CN\CN}(W,D)$. The corresponding Frobenius norm constraint is defined as
\begin{equation}\label{eq: kappa equiv}
 \tilde\kappa( \theta) = \prod_{\ell=0}^D\|\tilde{A}_\ell\|_F = \|A_D\|_{F} \prod_{\ell=0}^{D-1} 
\big( \|A_\ell\|_{F}^{2} + \|b_\ell\|_{2}^{2} +1\big)^{1/2}=\kappa(\theta).   
\end{equation}

Analogously to $\CN\CN(W,D,K)$ used in the approximation analysis, we define the constrained class
\[
\widetilde{\CN\CN}(W,D,K)=\{f_\theta \in \widetilde{\CN\CN}(W,D): \tilde\kappa(\theta) \le K \}.
\]

Before presenting the proof of Theorem~\ref{thm:excess_rate}, the following lemma summarizes the relation between the two neural network classes.
\begin{lemma}\label{lem:eqi_nn}
   For any $W, D,$ and $K \ge 0$, we have
\[
\widetilde{\CN\CN}(W, D, K) \equiv 
\CN\CN(W, D, K).
\] 
Moreover, $\tilde k(\theta)=k(\theta)$ for any choice of parameters $\theta$.
\end{lemma}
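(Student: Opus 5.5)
The plan is to exhibit an explicit, norm-preserving correspondence between parameters of the two architectures and show that the realized functions coincide. Given $\theta=((A_\ell,b_\ell)_{\ell=0}^{D-1},A_D)$ for a network in $\CN\CN(W,D)$, define the augmented matrices $\widetilde A_\ell$ and $\widetilde A_D=(A_D,0)$ as in \eqref{equ:nn_absorbed}. Conversely, any parameter of $\widetilde{\CN\CN}(W,D)$ is by definition of this block form, so it determines $(A_\ell,b_\ell)$ and $A_D$ uniquely. Thus the parameter map is a bijection, and it remains to check that it preserves the realized function and the norm.

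\textbf{Equality of functions.} I will prove by induction on $\ell$ that the $\ell$-th hidden state of the augmented network equals $(h_\ell^\top,1)^\top$, where $h_\ell$ is the $\ell$-th hidden state of the original network (with $h_0=x$). The base case is $\tilde x=(x^\top,1)^\top$. For the inductive step,
\[
\widetilde A_\ell\begin{pmatrix} h_\ell\\ 1\end{pmatrix}=\begin{pmatrix} A_\ell h_\ell+b_\ell\\ 1\end{pmatrix}.
\]
Applying $\sigma$ componentwise gives $(\sigma(A_\ell h_\ell+b_\ell)^\top,\sigma(1))^\top=(h_{\ell+1}^\top,1)^\top$, because $\sigma(1)=1$; this fact is the only place the ReLU matters. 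At the last layer, $(A_D,0)(h_D^\top,1)^\top=A_Dh_D$, which is the original output. One subtlety is that in the augmented class the final coordinate must stay equal to $1$. This holds automatically for the block form, since the last row of each $\widetilde A_\ell$ is $(0,\dots,0,1)$.

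\textbf{Equality of norms.} The Frobenius norm is the square root of the sum of squared entries, so $\|\widetilde A_\ell\|_F^2=\|A_\ell\|_F^2+\|b_\ell\|_2^2+1$ for $\ell<D$, and $\|\widetilde A_D\|_F=\|A_D\|_F$. Taking the product over layers gives $\tilde\kappa(\theta)=\kappa(\theta)$, which is \eqref{eq: kappa equiv}.

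\textbf{Width.} The augmented layers have one extra coordinate. I will treat this as a bookkeeping convention: the width $W$ of $\widetilde{\CN\CN}$ counts the original neurons and not the constant coordinate, as the definitions implicitly do. With that convention, the bijection maps $\{\kappa\le K\}$ onto $\{\tilde\kappa\le K\}$ with the same $W$ and $D$, and the two function classes coincide. The main obstacle is only this convention about width and the preservation of the constant coordinate; everything else is direct verification.
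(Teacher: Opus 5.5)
Your proposal is correct and follows essentially the same route as the paper: build the block-augmented matrices $\widetilde A_\ell$ and $\widetilde A_D=(A_D,0)$, check that the augmented network on $\tilde x=(x^\top,1)^\top$ realizes the same function, and use $\|\widetilde A_\ell\|_F^2=\|A_\ell\|_F^2+\|b_\ell\|_2^2+1$ to get $\tilde\kappa(\theta)=\kappa(\theta)$. Your induction via $\sigma(1)=1$ and your width convention make explicit what the paper leaves implicit; the paper itself counts the constant coordinate later, when it passes to $\widetilde{\CN\CN}(W+1,D,K)$ in the Rademacher bound.
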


The above lemma shows that the two neural network classes can essentially be regarded as equivalent, with a slight abuse of notation in identifying $x$ with the augmented variable $\tilde{x}$. This allows us to leverage results from prior work in our analysis. For completeness, we provide the proof below.

\begin{proof}[Proof of Lemma~\ref{lem:eqi_nn}]
We prove the two inclusions separately.

First, consider any function $\tilde f_{ \theta} \in \widetilde{\CN\CN}(W,D,K)$. 
Fixed $\tilde x$, expanding the
augmented matrix multiplication in $\tilde f_\theta(\tilde x)$ \eqref{equ:nn_absorbed}, we recover the
standard biased form \eqref{equ:relu_network} of $f_\theta(x)$. Moreover, by the definition of
the augmented norm, the corresponding parameters satisfy
$\kappa(\theta)=\widetilde\kappa( \tilde \theta)$. Hence the function $\tilde f_\theta$  belongs to
$\CN\CN(W,D,K)$.

Conversely, let $f_\theta\in \CN\CN(W,D,K)$ be represented in the standard
form \eqref{equ:relu_network}. Define the augmented input
$\widetilde x=(x^\top,1)^\top$ and the augmented matrices
\[
\widetilde A_D=(A_D,0),
\qquad
\widetilde A_\ell
=
\begin{pmatrix}
A_\ell & b_\ell\\
\mathbf 0 & 1
\end{pmatrix},
\qquad
\ell=0,\ldots,D-1.
\]
Then
\[
f_\theta(x)
=
\widetilde A_D
\sigma\bigl(
\widetilde A_{D-1}
\sigma(\cdots \sigma(\widetilde A_0\widetilde x))
\bigr).
\]
Therefore $f_\theta\in \widetilde{\CN\CN}(W,D,K)$.
 Furthermore,
\[
\widetilde\kappa(\theta)
=
\|\widetilde A_D\|_F
\prod_{\ell=0}^{D-1}\|\widetilde A_\ell\|_F
=
\|A_D\|_F
\prod_{\ell=0}^{D-1}
\bigl(\|A_\ell\|_F^2+\|b_\ell\|_2^2+1\bigr)^{1/2}
=
\kappa(\theta).
\]
This completes the proof.
\end{proof}


The proof of Theorem~\ref{thm:excess_rate} is now given.

\begin{proof}[Proof of \thmref{thm:excess_rate}] 
\textbf{(Network construction)} To obtain bounded outputs, we apply the truncation operator 
$\chi_1(x) = \max\{-1, \min\{x, 1\}\} \in [-1,1]$ to functions in $\CN\CN(W,D,K)$. 
This operator admits a ReLU realization of the form
\[
\chi_1(x) 
= \sigma(x) - \sigma(-x) 
- \sigma(x - 1)
+ \sigma(-x - 1).
\]


\textbf{(Error estimation)}
With \(
\tilde{f} := \mathop{\arg\min}_{f \in \CN\CN_1(W, D, K)} \CR(f)
\), the excess risk can be decomposed as
 \begin{align}\label{pf_errdcmp}
&\CR(\hat{f}) - \CR(f_*) \notag\\
&= \CR(\hat{f}) - \hat{\CR}(\hat{f}) + \hat{\CR}(\hat{f}) - \hat{\CR}(\tilde{f}) + \hat{\CR}(\tilde{f}) - \CR(\tilde{f}) + \CR(\tilde{f}) - \CR(f_*) \notag\\
&\le \CR(\hat{f}) - \hat{\CR}(\hat{f}) + \hat{\CR}(\tilde{f}) - \CR(\tilde{f}) + \CR(\tilde{f}) - \CR(f_*) \notag \\
& \leq \sup_{f\in \CN\CN_1({W,D,K})} \{\CR({f}) - \hat{\CR}({f}) \}
+
\sup_{f\in \CN\CN_1({W,D,K})}  \{\hat{\CR}(f) - \CR(f) \}+ \CR(\tilde{f}) - \CR(f_*).
\end{align} 

For the approximation error $\CR(\tilde{f}) - \CR(f_*)$, let $\phi \in \CN\CN(W,D,K)$ and define $f=\chi_1 \circ \phi \in \CN\CN_1(W,D,K)$. Then, for all $x \in X$,
\(
|f(x)-f_*(x)| \leq |\phi(x)-f_*(x)|,
\)
since $f_* \in [-1,1]$ and $\chi_1$ is $1$-Lipschitz. Thus,
it follows from \thmref{thm:apx_cmp} that
\begin{align}\label{equ:app_tr}
  &\CR(\tilde f)-\CR(f_*)
=\inf_{f \in \CN\CN_1(W,D,K)}\|f-f_*\|_2^2 \notag \\ &
\le \inf_{\phi \in \CN\CN( W, D, K)}\|\phi-f_*\|_2^2 
\le 
\max_{v^{(\ell)}:\pi_{1 \to L}}
K^{-\frac{4\alpha_{v^{(\ell)}}^*}{2L+ (D+L) d_{\text{in}(v^{(\ell)})}}}.  
\end{align}

We proceed by bounding the first estimation error term in \eqref{pf_errdcmp}. The second term can be bounded similarly.
Let $\mathcal{G}$ be a class of real-valued functions defined on ${X} \times \R$, such as
\[
\mathcal{G} := \left\{g_f\ \middle| \ g_f (x, y) = \left( f(x) - y \right)^2, f \in \CN\CN_1(W, D, K) \right\}.
\] 
For any $f \in \CN\CN_1(W,D,K)$, we have
$||g_f||_\infty  \leq 4$ since $|y|\leq 1$. 
Moreover, it is straightforward to observe that
\[
\sup_{f \in \CN\CN_1(W, D, K)}
\CR({f}) -  \hat{\CR}({f})  =
\sup_{g_f \in \mathcal G}
\left( \E_\rho[g_f(x,y)] - \frac{1}{n}\sum_{i=1}^n g_f(x_i,y_i) \right).
\]
By the standard one-sided Rademacher complexity bound and McDiarmid inequality for uniformly bounded
function classes, see for example \citep[Theorem~3.3]{mohri2018foundations},
with probability at least \(1-\delta/2\),
\[
\sup_{g_f\in\mathcal G}
\left\{
\E_\rho[g_f]
-
\frac1n\sum_{i=1}^n g_f(x_i,y_i)
\right\}
\le
2\hat{\mathfrak R}_n(\mathcal G)
+
12\sqrt{\frac{\log(4/\delta)}{2n}}.
\]
To bound the complexity of $\CG$, we apply the contraction principle of Ledoux--Talagrand 
\citep{ledoux1991probability,bartlett2002rademacher}, which states that the Rademacher complexity is stable under Lipschitz transformations. 
Observe that
\(
g_f(x,y) 
\)
is a Lipschitz function of $f(x)$ with Lipschitz constant bounded by $4$. 
Therefore, as $\chi_1$ is Lipschitz as well, we obtain
\[
\hat{\FR}_n(\CG) 
\le
4\,\hat{\FR}_n(\CN\CN_1(W,D,K))
\leq 
4
\,\hat{\FR}_n(\CN\CN(W,D,K))
\]

From Lemma~\ref{lem:eqi_nn} and \citep[Theorem 3.1]{golowich2020size}, the Rademacher complexity of norm-constrained neural networks can be bounded by 
\[\hat{\FR}_n(\CN\CN(W,D,K)) \leq \hat{\FR}_n(\widetilde{\CN\CN}(W+1,D,K)) \le \frac{\left(\sqrt{2\log(2)D}+1\right)K}{\sqrt n}.\]

Therefore, with probability at least $1-\delta/2$, the first estimation error 
\[\sup_{f \in \CN\CN_1(W, D, K)}
\CR({f}) -  \hat{\CR}({f})  \le 8\frac{(\sqrt{2\log (2)D}+1)\;K}{\sqrt{n}}+
12\sqrt{\frac{\log(4/\delta)}{2n}}.\]



Combining this with the error decomposition \eqref{pf_errdcmp} and the above approximation bound \eqref{equ:app_tr}, with probability at least $1-\delta$, there holds 
\begin{align*}
\CR(\hat{f}) - \CR(f_*)
&\lesssim \frac{K \sqrt{D}}{\sqrt{n}} 
+ \sqrt{\frac{\log(4/\delta)}{2n}}
+ L \max_{v^{(\ell)}:\pi_{1 \to L}}
K^{-\frac{4\alpha_{v^{(\ell)}}^*}{2L+(D+L) d_{\text{in}(v^{(\ell)})}}}.
\end{align*}

By taking 
\[
K
\asymp\max_{v^{(\ell)}:\pi_{1 \to L}}
\left(n\right)^{
\frac12
\frac{2L+(D+L)d_{\text{in}({v^{(\ell)}})}}{
2L+(D+L)d_{\text{in}({v^{(\ell)}})}+4\alpha_{v^{(\ell)}}^*}},
\]
 the following holds with probability at least $1-\delta$,
\[
\CR(\hat f)-\CR(f_*)
\lesssim \sqrt{\log \frac{4}{\delta}}
\max_{v^{(\ell)}:\pi_{1 \to L}}n^{-
\frac{2\alpha_{v^{(\ell)}}^*}{
2L+(D+L)d_{\text{in}({v^{(\ell)}})}+4\alpha_{v^{(\ell)}}^*
}}.
\]
This completes the proof. 

%
\end{proof}

\end{document}

%% file: macros.tex
\def \bn {\mathbf{n}}
\def \bs {\mathbf{s}}
\def \bx {\mathbf{x}}

\newcommand{\be}{\begin{equation}}
\newcommand{\ee}{\end{equation}}
\newcommand{\br}{\begin{rem}}
\newcommand{\er}{\end{rem}}
\newcommand{\bq}{\begin{qu}}
\newcommand{\eq}{\end{qu}}
\newcommand{\en}{\end{enumerate}}
\newcommand{\bi}{\begin{itemize}}
\newcommand{\ei}{\end{itemize}}
\newcommand{\beas}{\begin{eqnarray*}}
\newcommand{\eeas}{\end{eqnarray*}}
\newcommand{\bea}{\begin{eqnarray}}
\newcommand{\eea}{\end{eqnarray}}

\newcommand{\E}{\mathbb{E}}

\newcommand{\R}{\mathbb R}

\def\CC{\mathcal C} 
\def\CE{\mathcal E} 

\def\CG{\mathcal G} 

\def\CN{\mathcal N} 
\def\CR{\mathcal R} 
\def\CV{\mathcal{V}}

\def\EE{\mathbb{E}} 
\def\NN{\mathbb{N}} 

\def \FR {\mathfrak{R}}

\def \md {\mathrm{d}} 

\newtheorem{theorem}{Theorem}
\newtheorem{lemma}{Lemma}
\newtheorem{ass}{Assumption}
\newtheorem{proposition}{Proposition}
\newtheorem{rem}{Remark}

\newcommand{\thmref}[1]{Theorem~\ref{#1}}

